\newtheorem{thm}{Theorem}[section]{\bfseries}{\itshape}
\newtheorem{defi}{Definition}[section]{\bfseries}{\itshape}
\newtheorem{lem}{Lemma}[section]{\bfseries}{\itshape}
\newtheorem{cor}{Corollary}[section]{\bfseries}{\itshape}
\newtheorem{rem}{Remark}[section]{\bfseries}{\rmfamily}
\newtheorem{prop}{Proposition}[section]{\bfseries}{\itshape}
\newtheorem{asum}{Assumption}[section]{\bfseries}{\itshape}
{\bfseries}{\rmfamily}
\begin{document}
\title{An Asymptotic Equation Linking WAIC and WBIC in Singular Models}
%
%
\author[1]{Naoki Hayashi}
\author[2]{Takuro Kutsuna}
\author[3]{Sawa Takamuku}
\affil[1]{TOYOTA CENTRAL R\&D LABS., INC., Tokyo Campus, Koraku Mori Building 10F, 1-4-14 Koraku, Bunkyo-ku, Tokyo 112-0004, Japan.\protect\\
E-mail: Naoki.Hayashi.wm@mosk.tytlabs.co.jp}
\affil[2]{TOYOTA CENTRAL R\&D LABS., INC., Nagakute Campus, 41-1, Yokomichi, Nagakute, Aichi 480-1192, Japan.\protect\\
E-mail: kutsuna@mosk.tytlabs.co.jp}
\affil[3]{AISIN CORPORATION, Tokyo Research Center, Akihabara Dai Building 7F, 1-18-13 Sotokanda, Chiyoda-ku, Tokyo 101-0021, Japan.\protect\\
E-mail: sawa.takamuku@aisin.co.jp}
%
%
%
\maketitle              
\begin{abstract}
In statistical learning, models are classified as regular or singular depending on whether the mapping from parameters to probability distributions is injective.  
Most models with hierarchical structures or latent variables are singular, for which conventional criteria such as the Akaike Information Criterion and the Bayesian Information Criterion are inapplicable due to the breakdown of normal approximations for the likelihood and posterior.  
To address this, the Widely Applicable Information Criterion (WAIC) and the Widely Applicable Bayesian Information Criterion (WBIC) have been proposed.  
Since WAIC and WBIC are computed using posterior distributions at different temperature settings, separate posterior sampling is generally required.  
In this paper, we theoretically derive an asymptotic equation that links WAIC and WBIC, despite their dependence on different posteriors.  
This equation yields an asymptotically unbiased expression of WAIC in terms of the posterior distribution used for WBIC.  
The result clarifies the structural relationship between these criteria within the framework of singular learning theory, and deepens understanding of their asymptotic behavior.  
This theoretical contribution provides a foundation for future developments in the computational efficiency of model selection in singular models.

\end{abstract}
\section{\label{sec_intro}Introduction}

A statistical model is called a regular model if there exists an injective mapping from the parameter space to the set of probability distributions and the Fisher information matrix is positive definite.  
In contrast, models with hierarchical structures or latent variables--such as neural networks~\cite{Watanabe2, Aoyagi2024consideration, Aoyagi2025singular, nhayashiLinCBMRLCT}, reduced rank regression~\cite{Aoyagi1}, mixture models~\cite{Yamazaki1, Yamazaki2013comparing, SatoK2019PMM}, Markov model~\cite{Zwiernik2011asymptotic}, non-negative matrix factorization~\cite{nhayashi2, nhayashi8}, and latent Dirichlet allocation~\cite{nhayashi7, nhayashi9}--typically lack such a one-to-one map.  
These are known as singular models.  
Almost statistical models used in machine learning are singular models~\cite{Watanabe2007almost}.
In singular models, it is not possible to approximate the likelihood or posterior distribution by a normal distribution~\cite{SWatanabeBookE}.  
Moreover, it has been shown that Bayesian inference is more effective than maximum likelihood estimation or maximum a posteriori estimation in the sense that it achieves smaller generalization loss for singular models~\cite{SWatanabeBookE, SWatanabeBookMath}.

Given $n$ independent and identically random variables $D_n=(X_1, \ldots, X_n)$, we consider Bayesian inference using a statistical model $p(x|w)$.  
Here, each $X_i$ ($i=1,\ldots,n$) is assumed to follow the true distribution $q(x)$, and $w \in \mathcal{W} \subset \mathbb{R}^d$ denotes the parameter.  
Let $\beta > 0$ be the inverse temperature, $\varphi(w)$ the prior distribution, $Z_n(\beta)$ the marginal likelihood, and $\varphi^*_\beta(w|D_n)$ the (tempered) posterior distribution.  
Then, by the definition of conditional probability, the following holds:
\begin{align}
\varphi^*_\beta(w|D_n) &= \frac{\prod_{i=1}^n p(X_i|w)^\beta \varphi(w)}{Z_n(\beta)}, \\
Z_n(\beta) &= \int \prod_{i=1}^n p(X_i|w)^\beta \varphi(w) dw.
\end{align}
Define the probability distribution $p^*_\beta(x|D_n)$ as the model average with respect to the posterior distribution:
\begin{align}
p^*_\beta(x|D_n) = \int p(x|w)\varphi^*_\beta(w|D_n) dw.
\end{align}
This distribution is called the Bayesian predictive distribution.  
Statistical inference is the process of estimating the source (i.e., the true distribution) of the observed data.  
Therefore, Bayesian inference can be regarded as inferring the true distribution $q(x)$ by the Bayesian predictive distribution $p^*_\beta(x|D_n)$~\cite{SWatanabeBookMath}.

Although Bayesian inference is performed given $(p,\varphi)$ and the dataset $D_n$, it is necessary to evaluate whether $(p,\varphi)$ is appropriate with respect to the true distribution $q(x)$.  
Representative evaluation criteria include the free energy and generalization loss in generic cases \cite{SWatanabeBookMath, StatRethinkMcElreath2nd}.  
The free energy is defined as the negative log marginal likelihood:
\begin{align}
F_n(\beta) = -\frac{1}{\beta}\log Z_n(\beta).
\end{align}
The Bayesian generalization loss~$G_n$ and empirical loss~$T_n$ are defined as:
\begin{align}
G_n(\beta) &=-\int q(x)\log p^*_\beta(x|D_n)dx, \\
T_n(\beta) &= -\frac{1}{n} \sum_{i=1}^n \log p^*_\beta(X_i|D_n),
\end{align}
respectively.  
The free energy and Bayesian generalization loss represent the accuracy of inference on the data generation process via the marginal likelihood, and the predictive performance on unseen data via the Bayesian predictive distribution, respectively.
In Bayesian inference, the case $\beta=1$ is of particular importance.  
Hereafter, for all relevant quantities, the subscript and argment $\beta$ are omitted when $\beta=1$.

It has been shown that these quantities admit asymptotic expansions involving two birational invariants determined by $(q,p,\varphi)$: the real log canonical threshold (RLCT) $\lambda$ and the singular fluctuation $\nu$~\cite{SWatanabeBookE, SWatanabeBookMath}:
\begin{align}
F_n &= nL_n(w_0) + \lambda \log n + O_p(\log \log n), \label{eq_asympF} \\
\mathbb{E}[G_n] &= L(w_0)+\frac{\lambda}{n} + o\left( \frac{1}{n} \right), \label{eq_asympG} \\
\mathbb{E}[T_n] &= L(w_0)+\frac{\lambda-2\nu}{n} + o\left( \frac{1}{n} \right). \label{eq_asympT}
\end{align}
 
Here, $\mathbb{E}[\cdot]$ denotes the expectation with respect to the true distribution of the dataset $D_n$, and $L_n(w)$ and $L(w)$ are defined as
\begin{align}
L(w) &= -\int q(x) \log p(x|w) dx, \\
L_n(w) &= -\frac{1}{n} \sum_{i=1}^n \log p(X_i|w),
\end{align}
respectively.
In other words, $L(w)$ is an expected loss (also referred to as population loss \cite{Lau2023quantifying, Hoogland2024developmental}) and $L_n(w)$ is an empirical loss ($1/n$ timed negative log likelihood). Indeed, $\mathbb{E}[L_n(w)]=L(w)$ holds.
The parameter $w_0 \in \mathcal{W}$ minimizes the Kullback–Leibler divergence between the true distribution and the statistical model.  
In general, $w_0$ is not unique but continuous infinite.
Both the free energy and the Bayesian generalization loss have asymptotic expansions whose leading terms involve the RLCT.
This framework is referred to as singular learning theory~\cite{SWatanabeBookE}.

In general, even when $\beta=1$, it is often analytically and numerically difficult to compute $F_n(\beta)$ and $G_n(\beta)$.  
Moreover, model selection criteria such as the Bayesian Information Criterion (BIC)~\cite{Schwarz1978BIC} and the Akaike Information Criterion (AIC)~\cite{Akaike1974AIC, AkaikeAIC} assume regular models and are not valid for singular models~\cite{Hartigan1985, Hagiwara2002problem, Hayasaka2004asymptotic, Watanabe1995generalized}.  
Accordingly, estimators based on singular learning theory have been proposed:  
the Widely Applicable Bayesian Information Criterion (WBIC)~\cite{WatanabeBIC} for $F_n$, and the Widely Applicable Information Criterion (WAIC)~\cite{WatanabeAIC} for $G_n(\beta)$.  
Let $\mathbb{E}_w^\beta[\cdot]$ and $\mathbb{V}_w^\beta[\cdot]$ denote the expectation and variance with respect to the posterior distribution $\varphi^*_\beta(w|D_n)$, respectively.  
Then, defining WBIC as $\hat{F}_n$ and WAIC as $\hat{G}_n(\beta)$, we have:
\begin{align}
\hat{F}_n &= \mathbb{E}_w^{1/\log n} [nL_n(w)], \label{eq_WBIC} \\
\hat{G}_n(\beta) &= T_n(\beta) + \frac{\beta V_n(\beta)}{n}. \label{eq_WAIC}
\end{align}
Watanabe proved the following asymptotic relations
\begin{align}
\hat{F}_n &= nL_n(w_0) + \lambda \log n + O_p(\sqrt{\log n}), \label{eq_WBICasymp} \\
\mathbb{E}[\hat{G}_n] &= L(w_0)+\frac{\lambda}{n} + o\left( \frac{1}{n} \right), \label{eq_WAICasymp}
\end{align}
and WBIC and WAIC are asymptotic estimator of the free energy~$F_n$ and Bayesian generalization loss~$G_n(\beta)$, respectively~\cite{WatanabeBIC, WatanabeAIC}:
\begin{align}
F_n &= \hat{F}_n + O_p(\sqrt{\log n}), \label{eq_asympWBIC} \\
\mathbb{E}[G_n(\beta)] &= \mathbb{E}[\hat{G}_n(\beta)] + o(1/n). \label{eq_asympWAICbeta}
\end{align}
In particular, when $\beta=1$, the difference between the expected values of WAIC and the Bayesian generalization loss becomes even smaller \cite{WatanabeAIC}:
\begin{align}
\mathbb{E}[G_n] &= \mathbb{E}[\hat{G}_n] + o(1/n^2). \label{eq_asympWAIC}
\end{align}
Here, $V_n(\beta)$ denotes the functional variance:
\begin{align}
V_n(\beta) &= \sum_{i=1}^n \mathbb{V}_w^{\beta}[\log p(X_i|w)].
\end{align}

Noting that the Bayesian predictive distribution is given by $p^*_{\beta}(x|D_n)=\mathbb{E}_w^{\beta}[p(x|w)]$, we observe that WAIC and WBIC require expectation computations under posterior distributions with different inverse temperatures: $\mathbb{E}_w^{\beta}[\cdot]$ (particularly $\beta=1$) for WAIC, and $\mathbb{E}_w^{1/\log n}[\cdot]$ for WBIC.  
However, a direct relationship between WAIC and WBIC has not been established, making it difficult to estimate both simultaneously.
This study derives an equation that connects WAIC and WBIC.
As a consequence of this derivation, we also provide a formula for estimating the WAIC in the case $\beta=1$ with $\mathbb{E}_w^{1/\log n}[\cdot]$, instead of $\mathbb{E}_w[\cdot]$.
This work provides a theoretical foundation for future developments in model selection based on WAIC and WBIC.

The structure of this paper is as follows.  
Section~\ref{sec_estimators} introduces the underlying theory of this study (i.e., singular learning theory), with a particular focus on the equation of state. 
Section~3 presents the main theorem, which is proven in Section~4.  
Section~5 offers a discussion of the main theorem, and Section~6 concludes the paper.

\section{Singular Learning Theory and the Equation of State of Statistical Learning}
\label{sec_estimators}

As seen in Eqs.~\eqref{eq_asympF} and~\eqref{eq_asympG}, the coefficient of the leading term in the asymptotic behavior of the free energy and the generalization loss is given by the RLCT, though its definition originates elsewhere. The same applies to the singular fluctuation in Eq.~\eqref{eq_asympT}.
One equivalent definition of the RLCT involves the largest pole of the zeta function of learning theory~\cite{SWatanabeBookE}.
For details, see~\cite{SWatanabeBookE, SWatanabeBookMath}.

\begin{defi}[Real Log Canonical Threshold]
Let the average log loss function be defined by
\begin{align}
K(w) = \int q(x) \log \frac{p(x \mid w_0)}{p(x \mid w)}~dx.
\end{align}
Then, the following complex-valued function is referred to as the zeta function of learning theory:
\begin{align}
\zeta(z) = \int K(w)^z \varphi(w)~dw.
\end{align}
This function is holomorphic for $\Re(z) > 0$ and can be analytically continued to a meromorphic function over the entire complex plane. It is known that all of its poles are negative rational numbers~\cite{SWatanabeBookMath}. Denote the largest pole by $-\lambda$; then~$\lambda$ is called the real log canonical threshold (RLCT). The order of this pole is referred to as the multiplicity~$m$.
\end{defi}
The function~$K(w)$ depends only on the pair~$(q, p)$, and the zeta function~$\zeta(z)$ is determined by~$K(w)$ and the prior~$\varphi(w)$. Hence, the RLCT~$\lambda$ (and its multiplicity~$m$) is determined by the triplet~$(q, p, \varphi)$. One of the main results of singular learning theory is that the RLCT governs the asymptotic behavior of the free energy and the generalization loss, as shown in Eqs.~\eqref{eq_asympF} and~\eqref{eq_asympG}.

Since the RLCT depends on the unknown true distribution~$q(x)$, it cannot be directly computed in practice.
In fact, there are many theoretical researches to clarify the RLCT of singular models~\cite{Aoyagi1, Aoyagi2024consideration, Aoyagi2025singular, nhayashi2, nhayashi7, nhayashi8, nhayashi9, nhayashiLinCBMRLCT, SatoK2019PMM, Watanabe2, Yamazaki1, Yamazaki2013comparing, Zwiernik2011asymptotic}.
In contrast, WAIC and WBIC provide estimators for~$G_n(\beta)$ and~$F_n$ as Eqs.~\eqref{eq_asympWAICbeta} and~\eqref{eq_asympWBIC} although~$\lambda$ does not appear in the definitions of them (Eqs.~\eqref{eq_WAIC} and~\eqref{eq_WBIC})~\cite{WatanabeAIC, WatanabeBIC}.
This is because their asymptotic behaviors matche the same order of~$G_n(\beta)$ and~$F_n$ mentioned in Eqs.~\eqref{eq_WAICasymp} and~\eqref{eq_WBICasymp}.
Therefore, WAIC and WBIC allow for practical estimation of the generalization loss and free energy when $\lambda$ is unknown.

\begin{defi}[Singular Fluctuation]
Let~$V_n(\beta)$ be the functional variance. Its expectation converges as~$n \to \infty$, and we define the singular fluctuation~$\nu(\beta)$ by:
\begin{align}
\mathbb{E}[V_n(\beta)] = \frac{2}{\beta} \nu(\beta) + o(1).
\end{align}
\end{defi}

\begin{rem}
As we see later, when~$\beta \ne 1$, the asymptotic behavior of the free energy, Bayesian generalization loss, and training loss differs from the~$\beta = 1$ case. However, the RLCT~$\lambda$ is determined solely by the triplet~$(q, p, \varphi)$ and is independent of the posterior and hence of~$\beta$.
\end{rem}

Based on similar principles as WAIC and WBIC, an Equation of State for general inverse temperature~$\beta > 0$ has been derived~\cite{WatanabeEoS}\footnote{Historically, the equation of state was derived earlier than WAIC.}.
For the equation, we define Gibbs losses according to~\cite{WatanabeEoS}.

\begin{defi}[Gibbs Loss]
The Gibbs generalization loss and Gibbs training loss are defined as:
\begin{align}
G'_n(\beta) &= -\mathbb{E}_w^\beta \left[ \int q(x) \log p(x \mid w)~dx \right], \\
T'_n(\beta) &= -\mathbb{E}_w^\beta \left[ \frac{1}{n} \sum_{i=1}^n \log p(X_i \mid w) \right].
\end{align}
\end{defi}

The following result, known as the equation of state of statistical learning, relates the Bayes and Gibbs losses~\cite{WatanabeEoS}.

\begin{thm}[Equation of State of Statistical Learning~\cite{WatanabeEoS}]
\label{thm_eos}
\begin{align}
\mathbb{E}[G_n(\beta) - T_n(\beta)] &= 2\beta \mathbb{E}[T'_n(\beta) - T_n(\beta)] + o(1/n), \\
\mathbb{E}[G'_n(\beta) - T'_n(\beta)] &= 2\beta \mathbb{E}[T'_n(\beta) - T_n(\beta)] + o(1/n).
\end{align}
\end{thm}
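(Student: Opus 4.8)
The plan is to show that all three quantities appearing in the theorem---$\mathbb{E}[G_n(\beta)-T_n(\beta)]$, $\mathbb{E}[G'_n(\beta)-T'_n(\beta)]$, and $2\beta\,\mathbb{E}[T'_n(\beta)-T_n(\beta)]$---share the common leading term $2\nu(\beta)/n$ up to $o(1/n)$, so that the two asserted identities both collapse onto this single target. Throughout I would introduce the pointwise Jensen gap $f(x) = \log \mathbb{E}_w^\beta[p(x\mid w)] - \mathbb{E}_w^\beta[\log p(x\mid w)] \ge 0$, which by the definitions of the Bayes and Gibbs losses satisfies $T'_n(\beta) - T_n(\beta) = \tfrac{1}{n}\sum_{i=1}^n f(X_i)$ and $G'_n(\beta) - G_n(\beta) = \int q(x) f(x)\,dx$.

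First I would handle the right-hand side. Writing $Y = \log p(x\mid w)$ as a random variable under the posterior and expanding its cumulant generating function, $f(x) = \tfrac{1}{2}\mathbb{V}_w^\beta[\log p(x\mid w)] + (\text{third and higher posterior cumulants})$. Because the tempered posterior concentrates, the per-sample posterior standard deviation of $\log p(X_i\mid w)$ is of order $n^{-1/2}$, so the summed higher cumulants contribute only $o(1/n)$ after averaging over the $n$ training points. Hence $\mathbb{E}[T'_n(\beta) - T_n(\beta)] = \tfrac{1}{2n}\mathbb{E}[V_n(\beta)] + o(1/n)$, and the singular-fluctuation definition $\mathbb{E}[V_n(\beta)] = \tfrac{2}{\beta}\nu(\beta) + o(1)$ then gives $2\beta\,\mathbb{E}[T'_n(\beta) - T_n(\beta)] = 2\nu(\beta)/n + o(1/n)$.

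Next I would establish the first identity directly from the cited WAIC results. By the definition of WAIC, $\hat{G}_n(\beta) - T_n(\beta) = \beta V_n(\beta)/n$, and by the asymptotic equivalence $\mathbb{E}[G_n(\beta)] = \mathbb{E}[\hat{G}_n(\beta)] + o(1/n)$ one obtains $\mathbb{E}[G_n(\beta) - T_n(\beta)] = \tfrac{\beta}{n}\mathbb{E}[V_n(\beta)] + o(1/n) = 2\nu(\beta)/n + o(1/n)$, which already matches the right-hand side. The second identity I would reduce to the first through the algebraic decomposition $G'_n(\beta) - T'_n(\beta) = \Big(\int q(x) f(x)\,dx - \tfrac{1}{n}\sum_{i=1}^n f(X_i)\Big) + \big(G_n(\beta) - T_n(\beta)\big)$, so that it suffices to prove $\mathbb{E}[\int q(x) f(x)\,dx - \tfrac{1}{n}\sum_{i=1}^n f(X_i)] = o(1/n)$; equivalently, that the out-of-sample functional variance $\mathbb{E}[\int q(x)\mathbb{V}_w^\beta[\log p(x\mid w)]\,dx]$ agrees with $\tfrac{1}{n}\mathbb{E}[V_n(\beta)]$ up to $o(1/n)$.

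This last step is the main obstacle. Heuristically it holds because upweighting a single training point $X_i$ in the tempered posterior perturbs $f(X_i)$ by only $O(1/n)$ relative to its own size, so the in-sample ``self-prediction'' bias is $O(1/n^2) = o(1/n)$; this is precisely the influence-function computation whose leading term is $\beta\,\mathbb{V}_w^\beta[\log p(X_i\mid w)]$. Turning this heuristic into a proof is where the real analytic work lies: I would expand the leave-one-out tempered posterior, bound its perturbation through the second moment of the log-likelihood-ratio empirical process, and invoke the resolution-of-singularities change of variables together with the convergence of the renormalized posterior that underlies singular learning theory, so that the remainder terms are uniformly controlled at order $o(1/n)$. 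Once this crux lemma is in place, both identities follow, since each side equals $2\nu(\beta)/n + o(1/n)$.
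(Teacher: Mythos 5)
The paper never proves Theorem~2.1 itself: it is quoted from Watanabe's equations-of-state paper, and within this paper's own framework it is an immediate algebraic corollary of Theorem~4.1 and Lemma~4.1. Subtracting the four expansions pairwise and using $\mathbb{E}[L_n(w_0)]=L(w_0)$ gives $\mathbb{E}[G_n(\beta)-T_n(\beta)]=\frac{1}{n}\mathbb{E}[\Xi_n(\beta)]+o(1/n)$, $\mathbb{E}[G'_n(\beta)-T'_n(\beta)]=\frac{1}{n}\mathbb{E}[\Xi_n(\beta)]+o(1/n)$, and $2\beta\,\mathbb{E}[T'_n(\beta)-T_n(\beta)]=\frac{\beta}{n}\mathbb{E}[\rho_n(\beta)]+o(1/n)$, and Lemma~4.1 identifies all three with $2\nu(\beta)/n+o(1/n)$. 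Your skeleton aims at exactly this common value $2\nu(\beta)/n$, and the purely algebraic parts of your argument are correct: the Jensen-gap identities $T'_n(\beta)-T_n(\beta)=\frac{1}{n}\sum_i f(X_i)$ and $G'_n(\beta)-G_n(\beta)=\int q(x)f(x)\,dx$, the decomposition of the second identity into the first plus an in-sample/out-of-sample discrepancy, and the use of the cited bound $\mathbb{E}[G_n(\beta)]=\mathbb{E}[\hat{G}_n(\beta)]+o(1/n)$ for the first identity (logically admissible since the paper states it, though historically inverted: the equation of state predates WAIC and was used to derive it).

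The genuine gap is that the two analytic claims carrying all the weight are asserted only heuristically, and together they \emph{are} the theorem: (a) $\mathbb{E}[T'_n(\beta)-T_n(\beta)]=\frac{1}{2n}\mathbb{E}[V_n(\beta)]+o(1/n)$, i.e.\ negligibility of the third and higher posterior cumulants of $\log p(X_i\mid w)$, and (b) $\mathbb{E}\bigl[\int q(x)f(x)\,dx-\frac{1}{n}\sum_i f(X_i)\bigr]=o(1/n)$. Your justifications --- ``the tempered posterior concentrates,'' per-sample posterior standard deviation of order $n^{-1/2}$, and a leave-one-out influence-function expansion --- are precisely the steps that cannot be taken for granted in singular models: the Fisher information is degenerate, the set of optimal parameters $w_0$ is a positive-dimensional analytic set, and the posterior does not localize around a point at any parametric rate, so no Laplace-type or standard influence-function argument applies. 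The only known way to control these remainders uniformly is the standard form of the log-likelihood ratio obtained by resolution of singularities together with convergence of the renormalized posterior process --- which is exactly the content of Theorem~4.1 that you would be re-deriving, and which your plan defers to ``invoke the resolution-of-singularities change of variables.'' (There is also a secondary unaddressed issue: converting the resulting $o_p(1/n)$ statements into $o(1/n)$ statements in expectation requires the uniform integrability established in that same machinery.) So as written the proposal is a correct reduction of the theorem to two statements of equivalent difficulty, plus a plan for them, rather than a proof; the complete route available inside the paper is simply to subtract the expansions of Theorem~4.1 and apply Lemma~4.1.
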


\begin{rem}
By the definition of singular fluctuation,
\begin{align}
\frac{\beta}{2} \mathbb{E}[V_n(\beta)] = \nu(\beta) + o(1),
\end{align}
so one can substitute~$\nu(\beta)$ with~$(\beta/2) \mathbb{E}[V_n(\beta)]$ in the above formulas. This leads to the derivation of WAIC as defined in Eq.~\eqref{eq_WAIC}.
\end{rem}

In this paper, following WAIC and WBIC, we do not assume model realizability and derive a new equation linking WAIC and WBIC.

\section{Main Theorem}
\label{sec_main}

We now state the main theorem of this study. We begin by introducing the fundamental conditions (FC), which are common to singular learning theory.

\begin{asum}[Fundamental Conditions (FC)]
\label{asum_fc}
Let~$\mathbb{E}_X[\cdot] = \int q(x) [\cdot]~dx$. Assume the following:
\begin{enumerate}
  \item The parameter space~$\mathcal{W}$ of the statistical model~$p(x \mid w)$ is compact, has a non-empty interior, and a piecewise-analytic boundary.
  \item The prior~$\varphi(w)$ can be decomposed as~$\varphi(w) = \varphi_1(w) \varphi_2(w)$, where~$\varphi_1(w)$ is a non-negative analytic function and~$\varphi_2(w)$ is a strictly positive~$C^\infty$ function.
  \item Let~$L^s(q)$ be the~$L^s$ space with respect to the measure~$q(x)~dx$. For some constant~$s \geq 6$\footnote{Since~$\int q(x)~dx = 1$, the inclusion~$L^s(q) \subset L^t(q)$ holds for~$s > t \geq 1$ if~$f \in L^s(q)$.}, there exists an open set~$W' \supset \mathcal{W}$ such that~$w \mapsto f(x, w)$ is analytic as an~$L^s(q)$-valued function on~$W'$.
  \item Let~$W_\varepsilon = \{ w \in \mathcal{W} \mid K(w) \leq \varepsilon \}$. There exist constants~$\varepsilon > 0$ and~$c > 0$ such that for all~$w \in W_\varepsilon$,
  \begin{align}
  \mathbb{E}_X[f(X, w)] \geq c \mathbb{E}_X[f(X, w)^2],
  \end{align}
  where~$f(x, w) = \log (p(x \mid w_0) / p(x \mid w))$ is the log-likelihood ratio.
\end{enumerate}
\end{asum}

\begin{rem}
Imposing FC is equivalent to the assumptions used in the derivation of WAIC and WBIC~\cite{WatanabeAIC, WatanabeBIC}. Therefore, we do not assume that the true distribution is realizable by the statistical model.
\end{rem}

Under these assumptions, we state the main theorem.

\begin{thm}[Main Theorem]
\label{thm_main}
Let~$\hat{G}_n(\beta)$ denote WAIC and~$\hat{F}_n$ denote WBIC. For inverse temperature~$\beta > 0$, we have:
\begin{align}
n\mathbb{E}[\hat{G}_n(\beta)] &= \mathbb{E}[\hat{F}_n] - \lambda \left( \log n - \frac{1}{\beta} \right) + \nu\left( \frac{1}{\log n} \right) + \nu(\beta) \left( 1 - \frac{1}{\beta} \right) + o(1).
\end{align}
In particular, for~$\beta = 1$,
\begin{align}
n\mathbb{E}[\hat{G}_n] &= \mathbb{E}[\hat{F}_n] - \lambda (\log n - 1) + \nu\left( \frac{1}{\log n} \right) + o(1),
\end{align}
where~$\lambda$ is the RLCT and~$\nu(\beta)$ is the singular fluctuation.
\end{thm}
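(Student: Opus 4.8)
\noindent\emph{Proof sketch.}
The plan is to expand $n\mathbb{E}[\hat{G}_n(\beta)]$ and $\mathbb{E}[\hat{F}_n]$ separately to additive $o(1)$ precision, keeping the common divergent term $nL(w_0)$ explicit, and then subtract so that $nL(w_0)$ cancels and only the RLCT and singular-fluctuation corrections survive. Concretely, I would aim to establish
\begin{align}
n\mathbb{E}[\hat{G}_n(\beta)] &= nL(w_0) + \frac{\lambda - \nu(\beta)}{\beta} + \nu(\beta) + o(1), \\
\mathbb{E}[\hat{F}_n] &= nL(w_0) + \lambda \log n - \nu(1/\log n) + o(1).
\end{align}
Once these two expansions are in hand, the main identity is pure algebra: subtracting the second display from the first cancels $nL(w_0)$ and regroups the remainder as $-\lambda(\log n - 1/\beta) + \nu(\beta)(1 - 1/\beta) + \nu(1/\log n)$. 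The case $\beta = 1$ then follows immediately, since $\nu(1)(1 - 1) = 0$ and $\log n - 1/\beta = \log n - 1$.

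For the WAIC side I would start from the definition $\hat{G}_n(\beta) = T_n(\beta) + \beta V_n(\beta)/n$, take expectations, and insert the equation-of-state expansion of $\mathbb{E}[T_n(\beta)]$ from~\cite{WatanabeEoS} together with the singular-fluctuation identity $\mathbb{E}[V_n(\beta)] = (2/\beta)\nu(\beta) + o(1)$. The $\beta V_n(\beta)/n$ correction contributes $2\nu(\beta)/n$, which combines with the $-\nu(\beta)$ appearing inside $\mathbb{E}[T_n(\beta)]$ to leave $+\nu(\beta)$; multiplying through by $n$ collapses the $1/n$-order corrections into the displayed $O(1)$ terms. This step is essentially bookkeeping and carries no analytic difficulty.

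The crux is the WBIC side, where I would exploit the exact identity
\begin{align}
\hat{F}_n = \mathbb{E}_w^{1/\log n}[nL_n(w)] = n\,T'_n(1/\log n),
\end{align}
that is, WBIC equals $n$ times the Gibbs training loss evaluated at the inverse temperature $\beta = 1/\log n$. Substituting $\beta = 1/\log n$ into the Gibbs-training-loss expansion $\mathbb{E}[T'_n(\beta)] = L(w_0) + n^{-1}(\lambda/\beta - \nu(\beta)) + o(1/n)$ and using $\lambda/\beta = \lambda\log n$ then yields the second display. The hard part will be justifying this substitution: the equation-of-state expansions are stated for \emph{fixed} $\beta$, and their $o(1/n)$ remainder could in principle degrade as $\beta \to 0$, so that $n \cdot o(1/n)$ need not remain $o(1)$ at $\beta = 1/\log n$. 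Controlling this requires the same uniform estimates that underlie Watanabe's original WBIC analysis in exactly this $\beta = 1/\log n$ regime~\cite{WatanabeBIC}; I would adapt that argument to track the $\beta$-dependence of the remainder explicitly and confirm that it stays $o(1)$ after multiplication by $n$. With this uniformity secured, the cancellation of $nL(w_0)$ and the final regrouping are routine.
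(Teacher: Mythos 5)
Your proposal is correct and follows essentially the same route as the paper: the paper's Lemma~4.2 is exactly your WBIC-side identity $\hat{F}_n = n T'_n(1/\log n)$ expanded at $\beta = 1/\log n$, and its main proof combines this with the expansion of $T_n(\beta) + \beta V_n(\beta)/n$ and the singular-fluctuation identities $\mathbb{E}[\Xi_n(\beta)] = 2\nu(\beta) + o(1)$, $\mathbb{E}[\rho_n(\beta)] = (2/\beta)\nu(\beta) + o(1)$, differing only in that it cancels the random variable $L_n(w_0)$ sample-wise before taking expectations, whereas you subtract two expectation-level expansions around $nL(w_0)$. The uniformity concern you flag about plugging the $n$-dependent $\beta = 1/\log n$ into a fixed-$\beta$ expansion is not treated more explicitly in the paper either: it simply invokes its Theorem~4.1 at that temperature, absorbing the $n$-dependence into the random variable $\Xi_n(1/\log n)$.
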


The proof of the main theorem is given in the next section. As an application, the theorem implies that both WAIC and WBIC can be approximated from the posterior distribution at inverse temperature~$1 / \log n$. In other words, once the posterior used to compute WBIC is available, WAIC can also be approximated.
To this end, we use the following proposition under FC.

\begin{prop}[Imai Estimator~\cite{Imai2019estimating}]
\label{prop_imai}
Define the estimator~$\hat{\lambda}_\mathbb{V}$ of the RLCT~$\lambda$ by:
\begin{align}
\hat{\lambda}_\mathbb{V} = \beta^2 \mathbb{V}_w^\beta [nL_n(w)],
\end{align}
where~$\beta = 1 / \log n$. Then, it holds that
\begin{align}
\hat{\lambda}_\mathbb{V} = \lambda + O_p(1 / \sqrt{\log n}).
\end{align}
This estimator is referred to as the Imai estimator~\cite{Imai2019estimating}.
\end{prop}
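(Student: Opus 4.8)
The plan is to exploit the thermodynamic (cumulant) structure of the tempered posterior and then pass to Watanabe's standard form to pin down the limiting variance and its rate. First I would record the elementary identities obtained by viewing $Z_n(\beta)=\int e^{-\beta nL_n(w)}\varphi(w)\,dw$ as a partition function: since $\mathbb{E}_w^\beta[nL_n(w)]=-\tfrac{d}{d\beta}\log Z_n(\beta)$ and $\mathbb{V}_w^\beta[nL_n(w)]=\tfrac{d^2}{d\beta^2}\log Z_n(\beta)$, the quantity $\hat{\lambda}_{\mathbb{V}}=\beta^2\mathbb{V}_w^\beta[nL_n(w)]$ is nothing but the second cumulant of the energy $nL_n(w)$, rescaled by $\beta^2$. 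Writing $nL_n(w)=nL_n(w_0)+nK_n(w)$ with $K_n(w)=L_n(w)-L_n(w_0)$, the constant $nL_n(w_0)$ drops out of the variance, so it suffices to study $\beta^2\mathbb{V}_w^\beta[nK_n(w)]=\mathbb{V}_w^\beta[\beta nK_n(w)]$. Using the WBIC-type expansion $\log Z_n(\beta)=-\beta nL_n(w_0)-\lambda\log(n\beta)+\cdots$ and differentiating twice would already suggest $\mathbb{V}_w^\beta[nL_n]\approx\lambda/\beta^2$; however this formal differentiation cannot be used to control the error, because the $O_p(\sqrt{\log n})$ remainder in the mean expansion need not survive differentiation. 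This is exactly why a direct analysis of the second cumulant is needed.

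The substantive step is to invoke resolution of singularities and the standard form under FC~\cite{SWatanabeBookE, SWatanabeBookMath}. On each local chart there is a proper analytic map $w=g(u)$ with $K(g(u))=u^{2k}$ and $\varphi(g(u))|g'(u)|=b(u)|u^h|$, $b>0$, whence $\lambda=\min_j (h_j+1)/(2k_j)$ with multiplicity $m$. The density of states then satisfies $v(t):=\int\delta(t-K(w))\varphi(w)\,dw\sim C\,t^{\lambda-1}(-\log t)^{m-1}$ as $t\downarrow 0$, which is the Mellin/Tauberian restatement of the pole structure of $\zeta(z)$. The FC normal form $f(x,g(u))=u^k a(x,u)$, with $\mathbb{E}_X[a(X,u)]=u^k$, yields the decomposition $nK_n(g(u))=nu^{2k}-\sqrt{n}\,u^k\,\xi_n(u)$, where $\xi_n$ is an empirical process that is $O_p(1)$ uniformly. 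Rescaling to the concentration scale $u^{2k}\sim 1/(n\beta)$ and setting $s=\beta nK(g(u))$, the tempered posterior of $s$ is $\propto e^{-s}s^{\lambda-1}\bigl(\log(n\beta)-\log s\bigr)^{m-1}\,ds$, which converges as $n\beta\to\infty$ to the $\mathrm{Gamma}(\lambda,1)$ law with multiplicity corrections of order $O(1/\log(n\beta))$. Since the empirical fluctuation contributes $\beta\sqrt{n}\,u^k\xi_n=\sqrt{\beta}\,\sqrt{s}\,\xi_n=O_p(\sqrt{\beta})$, one arrives at $\beta nK_n(w)=s-\sqrt{\beta}\,\sqrt{s}\,\xi_n+o_p(\sqrt{\beta})$ under the posterior.

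Finally I would read off the variance of this representation. Because $s\sim\mathrm{Gamma}(\lambda,1)$ has $\mathbb{V}[s]=\lambda$, and the fluctuation enters only at order $\sqrt{\beta}$ (through the cross term), we obtain $\mathbb{V}_w^\beta[\beta nK_n]=\lambda+O_p(\sqrt{\beta})$. With $\beta=1/\log n$ this is $\lambda+O_p(1/\sqrt{\log n})$, and since the multiplicity correction $O(1/\log(n\beta))=O(1/\log n)$ is of strictly smaller order, the stated rate is governed entirely by the empirical fluctuation. I expect the main obstacle to lie precisely in upgrading the distributional convergence $\beta nK_n\Rightarrow\mathrm{Gamma}(\lambda,1)$ to convergence of the \emph{second moment} at the claimed rate. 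This requires (i) a uniform-integrability argument so that $\mathbb{E}_w^\beta[(\beta nK_n)^2]$ converges together with the law, which in turn needs tail control of the posterior outside the concentration region and a summation over charts in which only those attaining the minimum defining $\lambda$ contribute at leading order (the remaining charts being polynomially suppressed), and (ii) a uniform-in-$u$ bound on $\xi_n(u)$, furnished by the $L^s(q)$-analyticity in FC, guaranteeing that its contribution is exactly $O_p(\sqrt{\beta})$ rather than larger. These uniformity and integrability estimates are the technical heart of the argument; once they are in place, the algebraic identification of the limiting variance with $\lambda$ via the Gamma law is immediate.
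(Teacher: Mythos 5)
First, a point of order: the paper contains no proof of Proposition~3.1 at all --- it is imported verbatim from Imai~\cite{Imai2019estimating}, so there is no internal argument to compare yours against; the relevant benchmark is the proof in the cited work, which rests on the same machinery as Watanabe's WBIC analysis~\cite{WatanabeBIC, SWatanabeBookMath}. Measured against that benchmark, your outline follows essentially the standard route and the key computations are right: the cumulant identity reducing $\hat{\lambda}_{\mathbb{V}}$ to $\mathbb{V}_w^\beta[\beta n K_n]$ (the constant $nL_n(w_0)$ indeed drops out), the density-of-states asymptotics $v(t)\sim C\,t^{\lambda-1}(-\log t)^{m-1}$, the rescaling $s=\beta n K$ giving the $\mathrm{Gamma}(\lambda,1)$ limit with multiplicity corrections of order $O(1/\log(n\beta))=O(1/\log n)$, which is negligible against the claimed rate, and the bookkeeping $\beta\sqrt{n}\,u^k\xi_n=\sqrt{\beta}\sqrt{s}\,\xi_n$ identifying the empirical fluctuation as the $O_p(\sqrt{\beta})=O_p(1/\sqrt{\log n})$ term that governs the error. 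You are also right, and it is a genuinely important observation, that twice differentiating the asymptotic expansion of $\log Z_n(\beta)$ is not a legitimate shortcut, and that the real technical burden is uniform integrability of $(\beta nK_n)^2$ under the posterior together with a uniform bound on $\xi_n$ furnished by the $L^s(q)$-analyticity in FC.

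One step your sketch glosses over: the fluctuation does not enter only through the integrand $\beta n K_n = s - \sqrt{\beta}\sqrt{s}\,\xi_n$; it also tilts the sampling measure itself, since the tempered likelihood factor on a chart is $e^{-s+\sqrt{\beta}\sqrt{s}\,\xi_n(u)}$, so the posterior law of $s$ is the Gamma law reweighted by $e^{\sqrt{\beta}\sqrt{s}\,\xi_n}$, not the Gamma law exactly. Expanding this tilt produces additional covariance contributions of the same order $O_p(\sqrt{\beta})$ as the cross term you computed, so the stated rate survives, but a complete proof must carry the tilt through both the variance calculation and the uniform-integrability step rather than treating $s$ as exactly Gamma-distributed and perturbing only the integrand. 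With that correction incorporated, your argument matches the structure of the proof in~\cite{Imai2019estimating}, and the remaining work is exactly the uniformity and tail estimates you identified as the technical heart.
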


Using this estimator, we can treat~$\lambda$ as an estimable quantity based on data and model, leading to the following theorem.

\begin{thm}[An Asymptotic Unbiased Estimator of WAIC]
\label{thm_main_app}
Let~$\hat{G}_n$ and~$\hat{F}_n$ denote WAIC at the inverse temperature~$1$ and WBIC, respectively. Then,
\begin{align}
n\mathbb{E}[\hat{G}_n] &= \mathbb{E}\left[ \hat{F}_n - \hat{\lambda}_\mathbb{V} (\log n - 1) + \frac{1}{2 \log n} V_n\left(\frac{1}{\log n} \right) \right] + O(\sqrt{\log n}) \\
&= \mathbb{E}\left[ \mathbb{E}_w^{1/\log n}[nL_n(w)] - \frac{1}{(\log n)^2} \mathbb{V}_w^{1/\log n}[nL_n(w)] (\log n - 1) \right. \notag \\
&\quad \left. + \frac{1}{2 \log n} \sum_{i=1}^n \mathbb{V}_w^{1/\log n}[\log p(X_i \mid w)] \right] + O(\sqrt{\log n}).
\end{align}
\end{thm}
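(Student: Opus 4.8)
The plan is to start from the $\beta = 1$ case of the Main Theorem (Theorem~\ref{thm_main}) and to replace the two birational invariants $\lambda$ and $\nu(1/\log n)$—which are not directly computable—by estimable quantities built from the single tempered posterior at inverse temperature $1/\log n$, i.e. the posterior already used for WBIC. Concretely, the $\beta = 1$ identity reads
\begin{align}
n\mathbb{E}[\hat{G}_n] = \mathbb{E}[\hat{F}_n] - \lambda(\log n - 1) + \nu\!\left(1/\log n\right) + o(1),
\end{align}
so the task reduces to substituting $\lambda$ via the Imai estimator and $\nu(1/\log n)$ via the functional variance, while carefully tracking the resulting error orders.

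First I would handle the singular-fluctuation term. By the definition of $\nu(\beta)$, setting $\beta = 1/\log n$ gives $\mathbb{E}[V_n(1/\log n)] = 2\log n \cdot \nu(1/\log n) + o(1)$, hence
\begin{align}
\nu\!\left(1/\log n\right) = \frac{1}{2\log n}\,\mathbb{E}[V_n(1/\log n)] + o\!\left(\frac{1}{\log n}\right).
\end{align}
The residual $o(1/\log n)$ is absorbed into the final remainder. Since $\mathbb{E}[V_n(1/\log n)]/(2\log n) = \mathbb{E}\bigl[V_n(1/\log n)/(2\log n)\bigr]$ by linearity, this produces exactly the functional-variance term appearing inside the single expectation in the statement.

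Next I would replace $\lambda$ by the Imai estimator of Proposition~\ref{prop_imai}. With $\beta = 1/\log n$ we have $\hat{\lambda}_\mathbb{V} = \lambda + O_p(1/\sqrt{\log n})$, so that
\begin{align}
\lambda(\log n - 1) = \mathbb{E}[\hat{\lambda}_\mathbb{V}](\log n - 1) + \bigl(\lambda - \mathbb{E}[\hat{\lambda}_\mathbb{V}]\bigr)(\log n - 1).
\end{align}
The multiplier $(\log n - 1)$ is of order $\log n$, so even a $1/\sqrt{\log n}$ error in $\hat{\lambda}_\mathbb{V}$ inflates to order $\sqrt{\log n}$; this is precisely the source of the $O(\sqrt{\log n})$ remainder in the conclusion, which dominates both the $o(1)$ from the Main Theorem and the $o(1/\log n)$ from the fluctuation substitution. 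Assembling the two substitutions and using linearity of expectation to merge $\hat{F}_n$, $\hat{\lambda}_\mathbb{V}(\log n - 1)$, and $V_n(1/\log n)/(2\log n)$ under one $\mathbb{E}[\cdot]$ yields the first displayed equation; expanding $\hat{\lambda}_\mathbb{V} = (\log n)^{-2}\mathbb{V}_w^{1/\log n}[nL_n(w)]$, $\hat{F}_n = \mathbb{E}_w^{1/\log n}[nL_n(w)]$, and $V_n(1/\log n) = \sum_{i=1}^n \mathbb{V}_w^{1/\log n}[\log p(X_i \mid w)]$ gives the second.

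The main obstacle will be justifying the passage from the in-probability bound $\hat{\lambda}_\mathbb{V} - \lambda = O_p(1/\sqrt{\log n})$ of Proposition~\ref{prop_imai} to the in-expectation bound $\mathbb{E}[\hat{\lambda}_\mathbb{V}] - \lambda = O(1/\sqrt{\log n})$ that the computation above silently requires when taking expectations. Converting stochastic boundedness into control of the mean needs a uniform-integrability argument for the sequence $\sqrt{\log n}\,(\hat{\lambda}_\mathbb{V} - \lambda)$, which I would secure from the moment conditions built into the Fundamental Conditions—in particular the $L^s(q)$-analyticity of $f(x,w)$ with $s \geq 6$, which supplies enough moments of the log-likelihood ratio to bound the tempered posterior variance uniformly in $n$. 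Once this integrability is in place, the remainder of the argument is routine bookkeeping of error orders, with everything subordinate to the $O(\sqrt{\log n})$ term arising from the Imai substitution.
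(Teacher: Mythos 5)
Your proposal follows essentially the same route as the paper's own proof: start from Theorem~\ref{thm_main} at $\beta=1$, substitute $\nu(1/\log n)$ by $\mathbb{E}[V_n(1/\log n)]/(2\log n)$ via the definition of singular fluctuation, and substitute $\lambda$ by the Imai estimator of Proposition~\ref{prop_imai}, with the factor $(\log n - 1)$ inflating the $O_p(1/\sqrt{\log n})$ estimation error into the dominant $O(\sqrt{\log n})$ remainder, after which the second display is just unwinding definitions. The only difference is that you explicitly flag the conversion of the in-probability bound into the in-expectation bound $\mathbb{E}[\hat{\lambda}_\mathbb{V}] = \lambda + O(1/\sqrt{\log n})$ (proposing a uniform-integrability argument), a step the paper performs silently; this is added care, not a different argument.
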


The proof is given in the next section. Note that since~$\mathbb{V}_w^\beta[\cdot] = \mathbb{E}_w^\beta[(\cdot)^2] - (\mathbb{E}_w^\beta[\cdot])^2$ holds, all the quantities inside the expectation on the right-hand side are computable from data and model, provided that the posterior expectation~$\mathbb{E}_w^{1/\log n}[\cdot]$ is available. Dividing both sides by~$n$, the right-hand side serves an asymptotically unbiased estimator of WAIC.

\section{\label{sec_proof}Proof of the Main Theorem}

\subsection{\label{subsec_pfprepare}Preliminaries}

Under the FC condition, the following asymptotic behaviors of the Bayesian and Gibbs generalization/training losses have been established~\cite{WatanabeAIC, SWatanabeBookMath}.

\begin{thm}[\cite{WatanabeAIC, SWatanabeBookMath}]\label{thm_slt_loss}
There exist random variables $\Xi_n(\beta)$ and $\rho_n(\beta)$ such that the following hold for $G_n(\beta)$, $T_n(\beta)$, $G'_n(\beta)$, and $T'_n(\beta)$:
\begin{align}
G_n(\beta) &= L(w_0) + \frac{1}{n}\left(\frac{\lambda}{\beta} + \frac{1}{2}\Xi_n(\beta) - \frac{1}{2} \rho_n(\beta) \right) + o_p \left( \frac{1}{n} \right), \\
T_n(\beta) &= L_n(w_0) + \frac{1}{n}\left(\frac{\lambda}{\beta} - \frac{1}{2}\Xi_n(\beta) - \frac{1}{2} \rho_n(\beta) \right) + o_p \left( \frac{1}{n} \right), \\
G'_n(\beta) &= L(w_0) + \frac{1}{n} \left(\frac{\lambda}{\beta} + \frac{1}{2}\Xi_n(\beta) \right) + o_p \left( \frac{1}{n} \right), \\
T'_n(\beta) &= L_n(w_0) + \frac{1}{n}\left(\frac{\lambda}{\beta} - \frac{1}{2}\Xi_n(\beta)  \right) + o_p \left( \frac{1}{n} \right).
\end{align}
Here, $\Xi_n(\beta)$ and $\rho_n(\beta)$ are determined by $(q,p,\varphi)$ and $\beta$, and satisfy the following relationships with the functional variance $V_n(\beta)$.
The random variable $\rho_n(\beta)$ converges in probability to $\rho(\beta)$, and:
\begin{align}
\mathbb{E}[\Xi_n(\beta)] &= \beta \mathbb{E}[\rho_n(\beta)] + o(1), \label{eq_singfluc_rv1} \\
\rho_n(\beta) &= \rho(\beta) + o_p(1), \label{eq_singfluc_rv2} \\
V_n(\beta) &= \rho(\beta) + o_p(1). \label{eq_singfluc_rv3}
\end{align}
\end{thm}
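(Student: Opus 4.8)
The plan is to reduce everything to a canonical ``standard form'' for the log-likelihood ratio process via resolution of singularities, and then to read off all four expansions and the three relations from a single limiting Gaussian process. First I would invoke Hironaka's theorem: by FC~(1)--(2) the average log loss $K(w)$ is analytic and $w\mapsto f(x,w)$ is $L^s(q)$-analytic, so there exist a real analytic manifold $\mathcal M$ and a proper analytic map $g:\mathcal M\to\mathcal W$ such that in each local chart $K(g(u))=u^{2k}:=\prod_j u_j^{2k_j}$ and the Jacobian factorizes as $|g'(u)|=b(u)\,|u^{h}|$ with $b(u)>0$ and multi-indices $k,h$ \cite{SWatanabeBookMath}. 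Correspondingly, using FC~(3)--(4), the log-likelihood ratio admits the normal-crossing representation $f(x,g(u))=u^{k}\,a(x,u)$ with $a(\cdot,u)\in L^s(q)$ analytic in $u$, $\mathbb E_X[a(X,u)]=u^{k}$ (forced by $u^k\mathbb E_X[a]=K=u^{2k}$), and $\mathbb E_X[a(X,u)^2]$ bounded away from $0$ and $\infty$ on the relevant neighborhood. This is where the $c$-inequality of FC~(4) is essential, since it lets the non-realizable case be handled exactly as the realizable one near the zero set of $K$.

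Next I would introduce the normalized empirical process $\xi_n(u)=\frac1{\sqrt n}\sum_{i=1}^n\bigl(u^{k}-a(X_i,u)\bigr)$, so that identically $nK_n(g(u))=n\,u^{2k}-\sqrt n\,u^{k}\,\xi_n(u)$. The integrability budget $s\ge 6$ in FC~(3) guarantees a Donsker-type theorem: $\xi_n$ converges in law, uniformly on compact charts, to a mean-zero Gaussian process $\xi(u)$ whose covariance is fixed by the correlations of $a(X,u)$; the margin $s\ge6$ rather than $s\ge2$ is what controls the higher moments needed when the \emph{square} of $\xi_n$ later enters the functional variance. I would then write the $\beta$-tempered posterior in the blown-up coordinates, $\varphi^*_\beta\propto\exp\!\bigl(-\beta(n\,u^{2k}-\sqrt n\,u^{k}\xi_n(u))\bigr)\,b(u)\,|u^{h}|$, rescale $u$ so that the posterior mass concentrates at the singularity, and apply the state-density (RLCT) analysis. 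This produces the common leading coefficient $\lambda/\beta$ in all four losses, with $\lambda=\min_j (h_j+1)/(2k_j)$, and isolates the fluctuation contributions as functionals of $\xi$.

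With the standard form in hand I would define the two random variables. Setting $\tfrac12\Xi_n(\beta)=n\bigl(G'_n(\beta)-L(w_0)\bigr)-\lambda/\beta$ identifies the generalization-side fluctuation, and the identity $L_n(w)=L_n(w_0)+K_n(w)$ gives $T'_n(\beta)=L_n(w_0)+\mathbb E_w^\beta[K_n(w)]$, reproducing the $T'_n$ expansion with the \emph{same} $\Xi_n$. For the Bayesian losses I would expand the Jensen gap between $\mathbb E_w^\beta[\log p(X_i\mid w)]$ and $\log\mathbb E_w^\beta[p(X_i\mid w)]$ to second order, which shows $T'_n(\beta)-T_n(\beta)=\frac1{2n}V_n(\beta)+o_p(1/n)$; defining $\rho_n(\beta):=V_n(\beta)$ then yields the stated $T_n$ and $G_n$ expansions (the $L_n(w_0)$ terms cancel in $T'_n-T_n$). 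Convergence $V_n(\beta)=\rho(\beta)+o_p(1)$, i.e.\ Eq.~\eqref{eq_singfluc_rv3} and hence Eq.~\eqref{eq_singfluc_rv2}, follows from the continuous mapping theorem applied to the functional variance as a functional of the limit process $\xi$. Finally, Eq.~\eqref{eq_singfluc_rv1} can be obtained most economically by combining the now-established expansions of $G'_n,T'_n,T_n$ with the equation of state (Theorem~\ref{thm_eos}): taking expectations and using $\mathbb E[L_n(w_0)]=L(w_0)$ gives $\mathbb E[\Xi_n]=\beta\,\mathbb E[\rho_n]+o(1)$ at once.

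The hard part will be the two analytic-probabilistic ingredients, not the bookkeeping: establishing the uniform empirical-process convergence $\xi_n\Rightarrow\xi$ over the resolved space together with tightness of the squared process feeding the functional variance, and controlling every remainder to the required $o_p(1/n)$ order while the posterior mass sits on a shrinking neighborhood of the singular set. The non-realizable setting compounds this, since one must verify through FC~(4) that the contributions away from $\{K=0\}$ are exponentially negligible and do not corrupt the $1/n$ coefficient.
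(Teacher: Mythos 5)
First, a point of reference: the paper itself does not prove Theorem~\ref{thm_slt_loss} --- it is quoted from Watanabe with the proof explicitly deferred to \cite{WatanabeAIC, SWatanabeBookMath} --- so your proposal can only be judged against the proofs in those sources. Your outline does follow the same route as they do: Hironaka resolution bringing $K(w)$ to normal crossing form $u^{2k}$ with Jacobian $b(u)|u^h|$, the standard decomposition $f(x,g(u))=u^k a(x,u)$ with $\mathbb{E}_X[a(X,u)]=u^k$, the empirical process $\xi_n(u)$ with $nK_n(g(u))=nu^{2k}-\sqrt{n}\,u^k\xi_n(u)$, the rescaled posterior producing the common $\lambda/\beta$ term, and the second-order cumulant (Jensen-gap) expansion $T'_n(\beta)-T_n(\beta)=\frac{1}{2n}V_n(\beta)+o_p(1/n)$ for the Bayes--Gibbs differences. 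One small error along the way: FC~(4) gives only the upper bound $\mathbb{E}_X[a(X,u)^2]\le 1/c$; this quantity is \emph{not} bounded away from zero near the singular set (Cauchy--Schwarz gives only $\mathbb{E}_X[a(X,u)^2]\ge u^{2k}$), though fortunately only the upper bound is actually used.

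The genuine gap sits at the pivotal step. Defining $\tfrac12\Xi_n(\beta):=n\bigl(G'_n(\beta)-L(w_0)\bigr)-\lambda/\beta$ makes the $G'_n$ line true by fiat; the entire content of the theorem is then that the \emph{same} $\Xi_n$ reappears with the opposite sign in $T'_n$. Since $T'_n(\beta)=L_n(w_0)+\mathbb{E}_w^\beta[K_n(w)]$ and $nK_n=nK-\sqrt{n}\,u^k\xi_n(u)$ in resolved coordinates, that sign flip is equivalent to the identity $\mathbb{E}_w^\beta[nK(w)]=\lambda/\beta+\tfrac12\,\mathbb{E}_w^\beta[\sqrt{n}\,u^k\xi_n(u)]+o_p(1)$, which does not follow from anything you wrote: it is the central cancellation that Watanabe establishes through the renormalized posterior over the scaled variable and a Gaussian integration-by-parts (state-density) argument on the limit process. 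Your phrase ``reproducing the $T'_n$ expansion with the same $\Xi_n$'' conceals exactly the lemma the theorem rests on. Two further points: for the $G_n$ line it is the \emph{population} Jensen gap $n\int q(x)\,\mathbb{V}_w^\beta[\log p(x\mid w)]\,dx$ that enters, so you additionally need the uniform law of large numbers equating it with $V_n(\beta)$ up to $o_p(1)$ before ``$\rho_n:=V_n$'' yields the stated $G_n$ expansion; and deriving \eqref{eq_singfluc_rv1} from Theorem~\ref{thm_eos} is admissible within this paper, which quotes the equation of state as known, but is circular relative to the sources, where \eqref{eq_singfluc_rv1} is proved first (by the same integration-by-parts identity) and the equation of state is \emph{deduced} from the four expansions together with it.
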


See~\cite{WatanabeAIC, SWatanabeBookMath} for the proofs and explicit forms of the random variables $\Xi_n(\beta)$, $\rho_n(\beta)$, and $\rho(\beta)$.

To prove Theorems~\ref{thm_main} and~\ref{thm_main_app}, we present the following lemmas.

\begin{lem}[\cite{SWatanabeBookMath}]\label{lem_singfluc}
For the random variables $\Xi_n(\beta)$ and $\rho_n(\beta)$ in Theorem~\ref{thm_slt_loss}, the functional variance $V_n(\beta)$, and the singular fluctuation $\nu(\beta)$, the following relationships hold:
\begin{align}
\mathbb{E}[\Xi_n(\beta)] &= 2\nu(\beta) + o(1), \\
\mathbb{E}[\rho_n(\beta)] &= \frac{2}{\beta}\nu(\beta) + o(1).
\end{align}
\end{lem}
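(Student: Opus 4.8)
The plan is to derive both identities almost entirely from the structural relations already collected in Theorem~\ref{thm_slt_loss}, using the definition of the singular fluctuation as the anchor that ties the abstract random variables to a quantity with a known expectation. The key observation is that $\rho_n(\beta)$ and the functional variance $V_n(\beta)$ share the same probabilistic limit: Eqs.~\eqref{eq_singfluc_rv2} and~\eqref{eq_singfluc_rv3} both identify this limit as $\rho(\beta)$. Subtracting them gives $\rho_n(\beta) - V_n(\beta) = o_p(1)$, so the two quantities differ only by a term that vanishes in probability. Since the singular fluctuation is \emph{defined} through $\mathbb{E}[V_n(\beta)] = (2/\beta)\nu(\beta) + o(1)$, everything reduces to transporting the expectation of $V_n(\beta)$ onto $\rho_n(\beta)$, and then onto $\Xi_n(\beta)$ via the already-established relation between them.

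Concretely, I would proceed in four short steps. First, combine Eqs.~\eqref{eq_singfluc_rv2} and~\eqref{eq_singfluc_rv3} to obtain $\rho_n(\beta) - V_n(\beta) = o_p(1)$. Second, upgrade this to an expectation statement, $\mathbb{E}[\rho_n(\beta)] = \mathbb{E}[V_n(\beta)] + o(1)$; this is the only genuinely analytic step and is discussed below. Third, substitute the definition of $\nu(\beta)$ to obtain the second identity directly:
\begin{align}
\mathbb{E}[\rho_n(\beta)] = \mathbb{E}[V_n(\beta)] + o(1) = \frac{2}{\beta}\nu(\beta) + o(1).
\end{align}
Fourth, feed this into the relation~\eqref{eq_singfluc_rv1} linking $\Xi_n(\beta)$ and $\rho_n(\beta)$:
\begin{align}
\mathbb{E}[\Xi_n(\beta)] = \beta\,\mathbb{E}[\rho_n(\beta)] + o(1) = \beta\cdot\frac{2}{\beta}\nu(\beta) + o(1) = 2\nu(\beta) + o(1),
\end{align}
which yields the first identity. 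The arithmetic is immediate once the expectation transfer is justified.

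The main obstacle is precisely the second step: convergence in probability does not in general imply convergence in expectation, so passing from $\rho_n(\beta) - V_n(\beta) = o_p(1)$ to $\mathbb{E}[\rho_n(\beta) - V_n(\beta)] = o(1)$ requires a uniform integrability argument. The plan is to supply this from the fundamental conditions, in particular the $L^s(q)$ analyticity of $w \mapsto f(x,w)$ with $s \geq 6$ in Assumption~\ref{asum_fc}. This moment hypothesis is exactly what controls the higher posterior cumulants of $\log p(X_i \mid w)$: it bounds the second moments $\mathbb{E}[V_n(\beta)^2]$ and $\mathbb{E}[\rho_n(\beta)^2]$ uniformly in $n$, so the sequence $\{\rho_n(\beta) - V_n(\beta)\}$ is uniformly integrable and the $o_p(1)$ difference does indeed have $o(1)$ mean. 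I would therefore organize the proof as follows: first establish the uniform $L^2$ bound from the FC moment assumption, then invoke the standard fact that $o_p(1)$ together with uniform integrability yields $o(1)$ in mean, and finally chain through the three algebraic substitutions above. The remaining content of the lemma is purely bookkeeping, so the analytic weight rests entirely on this single integrability estimate.
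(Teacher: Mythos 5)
Your proof follows essentially the same route as the paper's: it transfers the expectation of $V_n(\beta)$ to $\rho_n(\beta)$ through their common in-probability limit $\rho(\beta)$ from Eqs.~\eqref{eq_singfluc_rv2} and~\eqref{eq_singfluc_rv3}, anchors this with the defining relation $\mathbb{E}[V_n(\beta)] = \tfrac{2}{\beta}\nu(\beta) + o(1)$, and then chains through Eq.~\eqref{eq_singfluc_rv1} to get the statement for $\Xi_n(\beta)$. The only difference is that you explicitly flag and sketch the uniform-integrability argument needed to pass from $o_p(1)$ to $o(1)$ in mean, a step the paper performs silently by ``taking expectations'' and delegating to~\cite{SWatanabeBookMath}; this is a sound refinement of the same proof rather than a different approach.
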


\begin{proof}[{\rm Proof of Lemma~\ref{lem_singfluc}}]
This lemma is immediately obtained from the definition of the singular fluctuation and Eqs.~\eqref{eq_singfluc_rv2} and~\eqref{eq_singfluc_rv3}.
See also~\cite{SWatanabeBookMath}.
\end{proof}

\begin{lem}\label{lem_emplossWBIC}
Under the FC condition, the value $L(w_0)$ of the empirical log loss at $w_0$ can be expressed using the WBIC as:
\begin{align}
L_n(w_0) = \frac{1}{n}\hat{F}_n - \lambda \frac{\log n}{n} + \frac{1}{2n}\Xi_n\left(\frac{1}{\log n} \right) + o_p\left(\frac{1}{n}\right).
\end{align}
\end{lem}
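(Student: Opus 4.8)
The plan is to recognize WBIC as a rescaled Gibbs training loss and then read off the result directly from Theorem~\ref{thm_slt_loss}. First I would rewrite the Gibbs training loss by pulling the posterior expectation through the finite sum: since $-\frac{1}{n}\sum_{i=1}^n \log p(X_i \mid w) = L_n(w)$, we have $T'_n(\beta) = \mathbb{E}_w^\beta[L_n(w)]$. Comparing this with the definition of WBIC in Eq.~\eqref{eq_WBIC}, namely $\hat{F}_n = \mathbb{E}_w^{1/\log n}[nL_n(w)] = n\,\mathbb{E}_w^{1/\log n}[L_n(w)]$, yields the key identity $\hat{F}_n = n\,T'_n(1/\log n)$, equivalently $\tfrac{1}{n}\hat{F}_n = T'_n(1/\log n)$. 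This is the structural observation that makes the lemma almost immediate: WBIC is nothing but $n$ times the Gibbs training loss evaluated at the cooling temperature $\beta = 1/\log n$.

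Next I would specialize the expansion of $T'_n(\beta)$ from Theorem~\ref{thm_slt_loss} to $\beta = 1/\log n$. Since $\lambda/\beta = \lambda \log n$ at this temperature, the expansion reads
\begin{align}
T'_n(1/\log n) = L_n(w_0) + \frac{1}{n}\left(\lambda \log n - \frac{1}{2}\Xi_n\!\left(\frac{1}{\log n}\right)\right) + o_p\!\left(\frac{1}{n}\right).
\end{align}
Substituting the identity from the first step and solving for $L_n(w_0)$ by moving the $\hat{F}_n/n$ and $\lambda\log n / n$ terms across then reproduces the claimed formula, with the $-\tfrac{1}{2}$ sign on $\Xi_n$ flipping to $+\tfrac{1}{2}$ upon rearrangement. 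No further estimates are needed.

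The main obstacle is justifying that the asymptotic expansion of $T'_n(\beta)$, stated in Theorem~\ref{thm_slt_loss} for a fixed inverse temperature, remains valid when $\beta = 1/\log n$ is allowed to depend on $n$ and tends to zero. I would address this by noting that this is precisely the temperature regime in which WBIC itself is defined and analyzed within singular learning theory, so that the $o_p(1/n)$ remainder continues to hold at the $n$-dependent temperature; the leading residual terms $\lambda/\beta$ and $\Xi_n(\beta)$ are simply carried through symbolically and evaluated at $\beta = 1/\log n$. Once this uniformity is granted, the remainder of the argument is the one-line rearrangement above.
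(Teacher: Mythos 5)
Your proposal is correct and follows essentially the same route as the paper's own proof: identify $\hat{F}_n = nT'_n(1/\log n)$ from the definition of WBIC, specialize the Gibbs training loss expansion of Theorem~\ref{thm_slt_loss} to $\beta = 1/\log n$, and rearrange. Your explicit flagging of the $n$-dependent temperature issue is a point the paper's proof passes over silently (it is only acknowledged later, in the discussion of Eq.~\eqref{eq_asympWBICprecise_ori}), so raising it is a mild improvement in care rather than a departure in method.
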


\begin{proof}[{\rm Proof of Lemma~\ref{lem_emplossWBIC}}]
By definition, WBIC is $n$ times the Gibbs training loss with inverse temperature $1/\log n$:
\begin{align}
nT'_n(1/\log n) &= -\mathbb{E}_w^{1/\log n} \left[ \sum_{i=1}^n \log p(X_i|w) \right] \\
&= \mathbb{E}_w^{1/\log n} [nL_n(w)] \\
&= \hat{F}_n.
\end{align}
Hence, by the asymptotic expansion of the Gibbs training loss in Theorem~\ref{thm_slt_loss}, we have:
\begin{align}
\hat{F}_n = nL_n(w_0) + \lambda \log n - \frac{1}{2}\Xi_n\left(\frac{1}{\log n} \right) + o_p(1). \label{eq_asympWBICprecise}
\end{align}
Rearranging gives:
\begin{align}
L_n(w_0) = \frac{1}{n}\hat{F}_n - \lambda \frac{\log n}{n} + \frac{1}{2n}\Xi_n\left(\frac{1}{\log n} \right) + o_p\left(\frac{1}{n}\right).
\end{align}
\end{proof}

\subsection{\label{subsec_pfmain}Proof of the Main Theorem}

We now prove the main theorem using the results above.

\begin{proof}[{\rm Proof of Theorem~\ref{thm_main}}]
From Lemma~\ref{lem_emplossWBIC}, we have:
\begin{align}\label{eq_emploss_eval1}
L_n(w_0) = \frac{1}{n}\hat{F}_n - \lambda \frac{\log n}{n} + \frac{1}{2n}\Xi_n\left(\frac{1}{\log n} \right) + o_p\left(\frac{1}{n}\right).
\end{align}
Substituting Eq.~\eqref{eq_emploss_eval1} into the asymptotic expansion of the Bayes training loss from Theorem~\ref{thm_slt_loss}, we obtain:
\begin{align}
T_n(\beta) &= L_n(w_0) + \frac{1}{n}\left(\frac{\lambda}{\beta} - \frac{1}{2}\Xi_n(\beta) - \frac{1}{2} \rho_n(\beta) \right) + o_p \left( \frac{1}{n} \right) \\
&=\frac{1}{n}\hat{F}_n {-} \lambda \frac{\log n}{n} {+} \frac{1}{2n}\Xi_n\left(\frac{1}{\log n} \right) {+} \frac{1}{n}\left(\frac{\lambda}{\beta} {-} \frac{1}{2}\Xi_n(\beta) {-} \frac{1}{2} \rho_n(\beta) \right) {+} o_p \left( \frac{1}{n} \right).
\end{align}
Adding $\beta V_n(\beta)/n$ to both sides, and using the definition of WAIC, the left-hand side becomes $\hat{G}_n(\beta)$:
\begin{align}
\hat{G}_n(\beta) &= \frac{1}{n}\hat{F}_n - \lambda \frac{\log n}{n} + \frac{1}{2n}\Xi_n\left(\frac{1}{\log n} \right) + \notag \\
&\quad \frac{1}{n}\left(\frac{\lambda}{\beta} - \frac{1}{2}\Xi_n(\beta) - \frac{1}{2} \rho_n(\beta) + \beta V_n(\beta) \right) + o_p \left( \frac{1}{n} \right). \label{eq_rvWAICWBICrelation}
\end{align}
Taking expectations:
\begin{align}
\mathbb{E}[\hat{G}_n(\beta)] &= \frac{1}{n}\mathbb{E}[\hat{F}_n] - \lambda \frac{\log n}{n} + \frac{1}{2n}\mathbb{E}\left[\Xi_n\left(\frac{1}{\log n} \right)\right] \notag \\
&\quad + \frac{1}{n}\left(\frac{\lambda}{\beta} - \frac{1}{2}\mathbb{E}[\Xi_n(\beta)] - \frac{1}{2} \mathbb{E}[\rho_n(\beta)] + \beta\mathbb{E}[V_n(\beta)] \right)+ o \left( \frac{1}{n} \right).
\end{align}
Applying Lemma~\ref{lem_singfluc}:
\begin{align}
\mathbb{E}[\hat{G}_n(\beta)] &{=} \frac{1}{n}\mathbb{E}[\hat{F}_n] {-} \frac{\lambda}{n}\left(\log n {-} \frac{1}{\beta} \right) {+} \frac{1}{n} \left( \nu\left(\frac{1}{\log n} \right) {+} \nu(\beta) {-} \frac{1}{\beta}\nu(\beta) \right) {+} o \left( \frac{1}{n} \right).
\end{align}
Multiplying both sides by $n$ yields Theorem~\ref{thm_main}.  
\end{proof}

\begin{proof}[{\rm Proof of Theorem~\ref{thm_main_app}}]
From Theorem~\ref{thm_main} at $\beta=1$, we have:
\begin{align} \label{eq_maintemp1}
n\mathbb{E}[\hat{G}_n]
=\mathbb{E}[\hat{F}_n] - \lambda\left(\log n - 1 \right) + \nu\left(\frac{1}{\log n} \right) + o(1).
\end{align}
By the definition of singular fluctuation,
\begin{align}
\nu\left(\frac{1}{\log n} \right) = \frac{1}{2 \log n}\mathbb{E}\left[V_n\left(\frac{1}{\log n} \right)\right] + o(1).
\end{align}
Also, from Proposition~\ref{prop_imai}, the Imai estimator $\hat{\lambda}_{\mathbb{V}}$ satisfies:
\begin{align}
\lambda &= \hat{\lambda}_{\mathbb{V}} + O_p\left( \frac{1}{\sqrt{\log n}} \right) \\
&= \mathbb{E}[\hat{\lambda}_{\mathbb{V}} ] + O\left( \frac{1}{\sqrt{\log n}} \right).
\end{align}
Substituting these into~\eqref{eq_maintemp1} gives:
\begin{align}
n\mathbb{E}[\hat{G}_n]
=\mathbb{E}[\hat{F}_n] {-} \mathbb{E}[\hat{\lambda}_{\mathbb{V}}]\left(\log n {-} 1 \right) {+} \frac{1}{2 \log n}\mathbb{E}\left[V_n\left(\frac{1}{\log n} \right)\right] {+} O\left(\sqrt{\log n}\right).
\end{align}
The result in Theorem~\ref{thm_main_app} follows from the definitions of WBIC, the Imai estimator, and the functional variance.
\end{proof}

\section{\label{sec_discuss}Discussion}

We discuss the main theorem from three perspectives.

\subsection{On the Asymptotic Error Term in the Main Theorem}
The following proposition, easily obtained from existing results, illustrates a basic asymptotic relationship between WAIC and WBIC.

\begin{prop}\label{prop_basic_relation}
Under the same assumption of the main theorem, we have
\begin{align}
n\mathbb{E}[\hat{G}_n] = \mathbb{E}[\hat{F}_n] - \lambda(\log n -1) + (m-1)\log\log n + O(1).
\end{align}
\end{prop}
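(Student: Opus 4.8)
The plan is to evaluate $n\mathbb{E}[\hat{G}_n]$ and $\mathbb{E}[\hat{F}_n]$ separately through their leading asymptotics and then subtract, exactly as in the computation at the end of the proof of Theorem~\ref{thm_main}, but now retaining the multiplicity-dependent term. For the WAIC side, I would combine the asymptotic unbiasedness at $\beta=1$ in Eq.~\eqref{eq_asympWAIC} with the generalization loss expansion in Eq.~\eqref{eq_asympG} to get $n\mathbb{E}[\hat{G}_n] = nL(w_0) + \lambda + o(1)$. The crucial observation is that this side carries no $\log\log n$ contribution, so the entire $(m-1)\log\log n$ term of the claim must be supplied by the WBIC side.

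For the WBIC side the target is the refined expansion $\mathbb{E}[\hat{F}_n] = nL(w_0) + \lambda\log n - (m-1)\log\log n + O(1)$. I would obtain this from the sharpened free energy expansion of singular learning theory, $F_n = nL_n(w_0) + \lambda\log n - (m-1)\log\log n + O_p(1)$, in which the multiplicity $m$ enters precisely as the coefficient of $\log\log n$ (this refines the coarser Eq.~\eqref{eq_asympF}), together with the fact that WBIC is an asymptotic estimator of $F_n$ (Eq.~\eqref{eq_asympWBIC}). Taking expectations and using $\mathbb{E}[L_n(w_0)]=L(w_0)$ then transfers the refined expansion to $\mathbb{E}[\hat{F}_n]$. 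Subtracting the two sides gives $n\mathbb{E}[\hat{G}_n] - \mathbb{E}[\hat{F}_n] = \lambda - \lambda\log n + (m-1)\log\log n + O(1)$, which rearranges to the stated identity.

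The delicate point, and the step I expect to be the main obstacle, is transferring the $-(m-1)\log\log n$ term from $\mathbb{E}[F_n]$ to $\mathbb{E}[\hat{F}_n]$ while keeping the residual at $O(1)$: the generic bound in Eq.~\eqref{eq_asympWBIC} is only $O_p(\sqrt{\log n})$, whose naive expectation would dominate both the $\log\log n$ term and the claimed error. To close this I would argue that the $O_p(\sqrt{\log n})$ fluctuation of WBIC about the free energy is centered to leading order, so that $\mathbb{E}[\hat{F}_n]=\mathbb{E}[F_n]+O(1)$, or equivalently expand $\mathbb{E}[\hat{F}_n]$ directly from the tempered partition function at $\beta = 1/\log n$, where the multiplicity again produces the $\log\log n$ term. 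A useful internal consistency check is that Lemmas~\ref{lem_emplossWBIC} and~\ref{lem_singfluc} already give $\mathbb{E}[\hat{F}_n] = nL(w_0) + \lambda\log n - \nu(1/\log n) + o(1)$; matching this against the refined expansion forces $\nu(1/\log n) = (m-1)\log\log n + O(1)$, and substituting this back into Theorem~\ref{thm_main} at $\beta=1$ reproduces the proposition, confirming that the two derivations agree.
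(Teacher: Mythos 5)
Your WAIC-side step agrees with the paper's proof (it combines Eq.~\eqref{eq_asympWAIC} with Eq.~\eqref{eq_asympG} in the same way), but the WBIC side contains a genuine gap, and it is precisely the step you flag as delicate. You assert that the multiplicity term transfers from the free energy to WBIC, i.e.\ that $\mathbb{E}[\hat{F}_n]=\mathbb{E}[F_n]+O(1)$, so that $\mathbb{E}[\hat{F}_n]=nL(w_0)+\lambda\log n-(m-1)\log\log n+O(1)$. This contradicts the very result the paper invokes at this point: Theorem~\ref{thm_asympWBICprecise_ori} states $\hat{F}_n=nL_n(w_0)+\lambda\log n+U_n\sqrt{\lambda\log n/2}+O_p(1)$ with $\mathbb{E}[U_n]=0$, i.e.\ the WBIC expansion carries \emph{no} $\log\log n$ term, so (taking expectations as the paper does) $\mathbb{E}[\hat{F}_n]=nL(w_0)+\lambda\log n+O(1)$, and therefore $\mathbb{E}[\hat{F}_n]-\mathbb{E}[F_n]=(m-1)\log\log n+O(1)$, which diverges for $m>1$. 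This systematic gap is exactly the known overestimation of WBIC relative to the free energy (cf.~\cite{Imai2019overestimation}, discussed in Section~\ref{sec_discuss}). Your first repair---that the fluctuation of WBIC about $F_n$ is ``centered to leading order''---removes only the mean-zero part $U_n\sqrt{\lambda\log n/2}$; it cannot remove the deterministic $(m-1)\log\log n$ drift sitting beneath it. Your second repair fails as well: since $\hat{F}_n=-\partial_\beta\log Z_n(\beta)\big|_{\beta=1/\log n}$, the term $-(m-1)\log\log(n\beta)$ in $-\log Z_n(\beta)$ contributes $-(m-1)/\bigl(\beta\log(n\beta)\bigr)$ to the Gibbs average, which at $\beta=1/\log n$ equals $-(m-1)(1+o(1))=O(1)$, not $-(m-1)\log\log n$; the tempered partition function does \emph{not} regenerate the $\log\log n$ term inside WBIC.

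The paper's proof places the multiplicity on the opposite side: it refines Eq.~\eqref{eq_asympF} to $\mathbb{E}[F_n]=nL(w_0)+\lambda\log n-(m-1)\log\log n+O(1)$, derives Eq.~\eqref{eq_relationGF}, and then swaps $\mathbb{E}[F_n]$ for $\mathbb{E}[\hat{F}_n]$ via Theorem~\ref{thm_asympWBICprecise_ori}, booking the discrepancy as $O(\log\log n)$. Your consistency check does expose a real tension here: if one tracks explicit coefficients through the paper's own route, the $+(m-1)\log\log n$ coming from the free-energy side cancels against the $-(m-1)\log\log n$ incurred in the swap, so the displayed $(m-1)\log\log n$ term survives only if $\nu(1/\log n)=(m-1)\log\log n+O(1)$---which is exactly your unproven premise, and which Theorem~\ref{thm_asympWBICprecise_ori}, taken in expectation, denies. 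Your argument does not resolve this tension; it assumes the contested side of it, and the ``check'' at the end is circular, since the claimed divergence of $\nu(1/\log n)$ is deduced from the very formula being proved. As it stands, the attempt relocates the $\log\log n$ term into $\mathbb{E}[\hat{F}_n]$ by fiat, in conflict with the refined WBIC asymptotics on which both the paper's proof and the cited literature rely.
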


To prove it, we state the following theorem.
Although the remainder term in the WBIC–free energy relation~\eqref{eq_asympWBIC} is $O_p(\sqrt{\log n})$, a more refined asymptotic behavior has been established~\cite{WatanabeBIC}:

\begin{thm}[\cite{WatanabeBIC}]\label{thm_asympWBICprecise_ori}
Let $U_n$ be a random variable with $\mathbb{E}[U_n]=0$, and suppose it converges in distribution to a normal distribution with mean zero.  
Then the WBIC $\hat{F}_n$ satisfies:
\begin{align}
\hat{F}_n = nL_n(w_0) + \lambda \log n + U_n \sqrt{\frac{\lambda \log n}{2}} + O_p(1). \label{eq_asympWBICprecise_ori}
\end{align}
\end{thm}

\begin{proof}[{\rm Proof of Proposition~\ref{prop_basic_relation}}]
Setting $\beta=1$ and using $\mathbb{E}[L_n(w_0)]=L(w_0)$, we obtain:
\begin{align}
n\mathbb{E}[G_n] &= nL(w_0) + \lambda + o(1), \\
\mathbb{E}[F_n] &= nL(w_0) + \lambda \log n + O(\log \log n).
\end{align}
Subtracting and rearranging yields:
\begin{align}\label{eq_relationGF}
n\mathbb{E}[G_n] = \mathbb{E}[F_n] - \lambda(\log n -1) + O(\log \log n).
\end{align}
We now consider applying WAIC and WBIC to equation~\eqref{eq_relationGF}.
By equation~\eqref{eq_asympWAIC}, WAIC satisfies:
\begin{align}
\mathbb{E}[G_n] = \mathbb{E}[\hat{G}_n] + o(1/n^2),
\end{align}
which can be substituted into~\eqref{eq_relationGF}.
Since $\mathbb{E}[U_n]=0$ from Theorem~\ref{thm_asympWBICprecise_ori}, we have:
\begin{align}
\mathbb{E}[\hat{F}_n] = nL(w_0) + \lambda \log n + O_p(1),
\end{align}
and hence:
\begin{align}
\mathbb{E}[F_n] = \mathbb{E}[\hat{F}_n] + O(\log \log n).
\end{align}
Therefore:
\begin{align}
n\mathbb{E}[\hat{G}_n] = \mathbb{E}[\hat{F}_n] - \lambda(\log n -1) + O(\log \log n).
\end{align}
In fact, the~$O(\log \log n)$ term can be expressed using the multiplicity~$m$ of the RLCT~$\lambda$ \cite{SWatanabeBookE, SWatanabeBookMath}, it completes the proof.
\end{proof}

A comparison with Proposition~\ref{prop_basic_relation} shows that the main theorem provides a more refined asymptotic characterization.  
In particular, while Proposition~\ref{prop_basic_relation} yields a remainder term of order~$O(1)$, the main theorem improves this to~$o(1)$.

\subsection{Asymptotic Behavior of WBIC}  
In the proof of the main theorem, we used the asymptotic behavior of WBIC~\eqref{eq_asympWBICprecise}, derived from the fact that WBIC is the Gibbs training loss with inverse temperature $1/\log n$.
In contrast, as previously mentioned, the asymptotic form~\eqref{eq_asympWBICprecise_ori} includes a $\sqrt{\log n}$-order deviation from the free energy.  
This discrepancy arises because the asymptotic expansion of the Gibbs training loss includes a plug-in of the $n$-dependent inverse temperature.  
Since the singular fluctuation $\nu(\beta)$ is defined as an expectation over the dataset, it remains constant if $\beta$ is fixed. However, in the case of WBIC with $\beta = 1/\log n$, both the random variable $\Xi_n(1/\log n)$ and the singular fluctuation $\nu(1/\log n)$ depend on $n$.  
In other words, equation~\eqref{eq_asympWBICprecise_ori} can be interpreted as a rewritten version of~\eqref{eq_asympWBICprecise}, substituting the random variable $\Xi_n(1/\log n)$.

\begin{prop}\label{prop_rv_singfluc_asymp}
The random variable $\Xi_n(1/\log n)$ satisfies the following:
\begin{align}
-\frac{1}{2} \Xi_n\left(\frac{1}{\log n}\right) = U_n \sqrt{\frac{\lambda \log n}{2}} + O_p(1),
\end{align}
where $U_n$ is the same as in equation~\eqref{eq_asympWBICprecise_ori}.
\end{prop}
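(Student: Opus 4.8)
The plan is to recognize that the claim is merely a comparison of two distinct asymptotic expansions of the same random variable, the WBIC $\hat{F}_n$, both of which are already available in the paper. The first is equation~\eqref{eq_asympWBICprecise} from Lemma~\ref{lem_emplossWBIC}, which arises from identifying WBIC with $n$ times the Gibbs training loss at inverse temperature $1/\log n$ and substituting $\beta = 1/\log n$ into the expansion of $T'_n(\beta)$ in Theorem~\ref{thm_slt_loss}:
\begin{align}
\hat{F}_n = nL_n(w_0) + \lambda \log n - \frac{1}{2}\Xi_n\!\left(\frac{1}{\log n}\right) + o_p(1).
\end{align}
The second is equation~\eqref{eq_asympWBICprecise_ori} from Theorem~\ref{thm_asympWBICprecise_ori}, Watanabe's refined expansion:
\begin{align}
\hat{F}_n = nL_n(w_0) + \lambda \log n + U_n \sqrt{\frac{\lambda \log n}{2}} + O_p(1).
\end{align}

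First I would subtract the common part $nL_n(w_0) + \lambda \log n$ from both expansions. Since $\hat{F}_n$ is the identical random variable in each case and the terms $nL_n(w_0)$ and $\lambda \log n$ coincide verbatim, this isolates two expressions for the single quantity $\hat{F}_n - nL_n(w_0) - \lambda \log n$. Equating them gives
\begin{align}
-\frac{1}{2}\Xi_n\!\left(\frac{1}{\log n}\right) + o_p(1) = U_n \sqrt{\frac{\lambda \log n}{2}} + O_p(1).
\end{align}

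The final step is routine bookkeeping on the error terms: because $o_p(1)$ is absorbed into $O_p(1)$, I would transpose the $o_p(1)$ to the right-hand side and merge it into the $O_p(1)$ remainder, yielding exactly the stated identity. I do not expect a genuine obstacle here; the only point requiring care—rather than a true difficulty—is confirming that both displays are expansions of the identical random variable with identical leading terms, so that subtracting them cleanly isolates $-\frac{1}{2}\Xi_n(1/\log n)$ against $U_n\sqrt{\lambda\log n/2}$ modulo $O_p(1)$. Once this is acknowledged, the proposition follows immediately from the compatibility of the two known expansions, and no further calculation is needed. Indeed, this is precisely the interpretation advertised in the surrounding discussion: equation~\eqref{eq_asympWBICprecise_ori} is a rewriting of equation~\eqref{eq_asympWBICprecise} in which the $n$-dependent random variable $\Xi_n(1/\log n)$ has been replaced by its leading-order stochastic fluctuation $U_n\sqrt{\lambda\log n/2}$.
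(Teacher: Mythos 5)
Your proposal is correct and is essentially identical to the paper's own proof: both equate the two expansions of $\hat{F}_n$ from Lemma~\ref{lem_emplossWBIC} (equation~\eqref{eq_asympWBICprecise}) and Theorem~\ref{thm_asympWBICprecise_ori} (equation~\eqref{eq_asympWBICprecise_ori}), cancel the common terms $nL_n(w_0) + \lambda \log n$, and absorb the $o_p(1)$ remainder into $O_p(1)$.
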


\begin{proof}[Proof of Proposition~{\rm \ref{prop_rv_singfluc_asymp}}]
From Lemma~\ref{lem_emplossWBIC}, equation~\eqref{eq_asympWBICprecise}, and Theorem~\ref{thm_asympWBICprecise_ori}, we have:
\begin{align}
&\quad nL_n(w_0)+\lambda \log n - \frac{1}{2}\Xi_n\left(\frac{1}{\log n}\right) + o_p(1) \notag \\
&= nL_n(w_0)+\lambda \log n + U_n \sqrt{\frac{\lambda \log n}{2}} + O_p(1).
\end{align}
Rearranging both sides completes the proof.
\end{proof}

While this proposition contributes to the understanding of higher-order terms in singular learning theory, neither $\Xi_n(1/\log n)$ nor $U_n$ can be directly computed from data and models. Therefore, this result does not immediately provide a way to correct WBIC.  
Imai~\cite{Imai2019overestimation} proposed a method to suppress the underestimation of WBIC by adding the singular fluctuation, based on the asymptotic behavior derived from the fact that WBIC corresponds to the Gibbs training loss with inverse temperature $1/\log n$, but the discussion remains within the expected value over the dataset.

\subsection{Applications and Limitations of the Main Theorem}

We discuss potential applications of our theoretical results and their limitations.
Theorem~\ref{thm_main_app} allows us to rewrite the integrand of~$\mathbb{E}[\cdot]$ in Theorem~\ref{thm_main} in terms of quantities that are computable from the data and model.
In particular, the right-hand side of Theorem~\ref{thm_main_app} is fully computable if one can obtain the posterior expectation with inverse temperature~$1/\log n$, i.e.,~$\mathbb{E}_w^{1/\log n}[\cdot]$, which has the same computational complexity as WBIC.
Therefore, by Theorem~\ref{thm_main_app}, once the posterior distribution for WBIC is available, WAIC can also be approximated.
This implies that the posterior distribution at inverse temperature~$1$ is no longer needed, potentially reducing the sampling cost by half when both WAIC and WBIC are to be computed.

However, the approximation accuracy of WAIC based on the posterior at inverse temperature~$1$ is asymptotically higher than that of the approximation derived from Theorem~\ref{thm_main_app}.
Specifically, the approximation error in the mean of WAIC is~$o(1/n^2)$, whereas that of Theorem~\ref{thm_main_app} is~$O(\sqrt{\log n}/n)$.

Moreover, the integrand of the expectation~$\mathbb{E}[\cdot]$ in Theorem~\ref{thm_main} can also be expressed using quantities computable from the data and model for general~$\beta$:

\begin{cor}[Corollary to Theorem~{\rm \ref{thm_main_app}}]\label{cor_main_app}
\begin{align}
n\mathbb{E}[\hat{G}_n(\beta)] &= \mathbb{E}\left[
\hat{F}_n - \hat{\lambda}_\mathbb{V}\left(\log n -\frac{1}{\beta}\right) \right. \notag \\
&\quad \left. + \frac{1}{2 \log n}V_n\left(\frac{1}{\log n} \right) + \frac{\beta}{2}V_n\left(\beta \right)\left( 1 - \frac{1}{\beta} \right)  \right]+ O\left( \sqrt{\log n} \right).
\end{align}
\end{cor}

\begin{proof}[Proof of Corollary~{\rm \ref{cor_main_app}}]
This follows by replacing the RLCT with the Imai estimator and the singular fluctuation with the functional variance, as in the proof of Theorem~\ref{thm_main_app}.
\end{proof}

However, the right-hand side of this corollary still contains quantities that depend on the posterior distribution with inverse temperature~$\beta$.
Even if such terms are moved to the left-hand side, the result essentially expresses that an asymptotically unbiased estimator for WAIC (plus some random variable) equals WBIC (plus another random variable).
Therefore, for general~$\beta$, WAIC and WBIC cannot be computed simultaneously from a single posterior distribution.
Only when~$\beta = 1$ can both WAIC and WBIC be jointly evaluated using Theorem~\ref{thm_main_app}.

In addition, Lemma~\ref{lem_emplossWBIC} allows us to evaluate the empirical log loss at~$w_0$ using WBIC and the Imai estimator:

\begin{prop}\label{prop_emplossWBICandImai}
Lemma~{\rm \ref{lem_emplossWBIC}} can be reformulated using the Imai estimator~$\hat{\lambda}_{\mathbb{V}}$ as:
\begin{align}
L_n(w_0) &= \frac{1}{n}\hat{F}_n - \hat{\lambda}_{\mathbb{V}}\frac{\log n}{n} + O_p\left(\frac{\sqrt{\log n}}{n}\right).
\end{align}
\end{prop}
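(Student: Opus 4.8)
The plan is to start from Lemma~\ref{lem_emplossWBIC} and rewrite its right-hand side so that only the computable Imai estimator $\hat{\lambda}_{\mathbb{V}}$ appears, eliminating both the unknown RLCT $\lambda$ and the random variable $\Xi_n(1/\log n)$. Recall that Lemma~\ref{lem_emplossWBIC} furnishes
\begin{align}
L_n(w_0) = \frac{1}{n}\hat{F}_n - \lambda \frac{\log n}{n} + \frac{1}{2n}\Xi_n\left(\frac{1}{\log n}\right) + o_p\left(\frac{1}{n}\right),
\end{align}
so the task reduces to two substitutions: replacing $\lambda$ by $\hat{\lambda}_{\mathbb{V}}$ while tracking the induced error, and absorbing the $\Xi_n(1/\log n)$ contribution into a single remainder.

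First I would handle the RLCT substitution via Proposition~\ref{prop_imai}, which gives $\lambda = \hat{\lambda}_{\mathbb{V}} + O_p(1/\sqrt{\log n})$. Since $\lambda$ enters with the prefactor $(\log n)/n$, the substitution error scales as $O_p(1/\sqrt{\log n}) \cdot (\log n)/n = O_p(\sqrt{\log n}/n)$, so that $-\lambda(\log n)/n = -\hat{\lambda}_{\mathbb{V}}(\log n)/n + O_p(\sqrt{\log n}/n)$. Next I would control the term $\frac{1}{2n}\Xi_n(1/\log n)$: although it superficially looks negligible, it is \emph{not} of order $o_p(1/n)$. Invoking Proposition~\ref{prop_rv_singfluc_asymp}, namely $-\frac{1}{2}\Xi_n(1/\log n) = U_n\sqrt{\lambda\log n/2} + O_p(1)$ with $U_n$ converging in distribution (hence $U_n = O_p(1)$), I obtain $\Xi_n(1/\log n) = O_p(\sqrt{\log n})$, and therefore $\frac{1}{2n}\Xi_n(1/\log n) = O_p(\sqrt{\log n}/n)$.

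Collecting the remainders---the substitution error $O_p(\sqrt{\log n}/n)$, the $\Xi_n$ term $O_p(\sqrt{\log n}/n)$, and the original $o_p(1/n)$, the last of which is strictly dominated---yields a single aggregate error of order $O_p(\sqrt{\log n}/n)$, which is exactly the claimed identity. The only conceptual subtlety, and hence the main obstacle, is recognizing that the $\Xi_n(1/\log n)$ term degrades the remainder from the $o_p(1/n)$ of Lemma~\ref{lem_emplossWBIC} to $O_p(\sqrt{\log n}/n)$. This degradation is precisely the $\sqrt{\log n}$-scale fluctuation identified in Proposition~\ref{prop_rv_singfluc_asymp}, arising from plugging the $n$-dependent inverse temperature $1/\log n$ into the Gibbs training loss; once this order is correctly accounted for, the remaining collection of error terms is routine.
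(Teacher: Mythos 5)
Your proposal is correct and follows essentially the same route as the paper's own proof: substitute $\hat{\lambda}_{\mathbb{V}}$ for $\lambda$ via Proposition~\ref{prop_imai} (incurring an $O_p(\sqrt{\log n}/n)$ error from the $(\log n)/n$ prefactor), then bound $\Xi_n(1/\log n)/(2n)$ by $O_p(\sqrt{\log n}/n)$ using Proposition~\ref{prop_rv_singfluc_asymp}. Your write-up is in fact slightly more explicit than the paper's in tracking why each remainder has that order, but the decomposition and the two key ingredients are identical.
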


\begin{proof}[Proof of Proposition~{\rm \ref{prop_emplossWBICandImai}}]
From Proposition~\ref{prop_imai}, replacing~$\lambda$ in Lemma~\ref{lem_emplossWBIC} with the Imai estimator~$\hat{\lambda}_{\mathbb{V}}$ gives:
\begin{align}
L_n(w_0) = \frac{1}{n}\hat{F}_n - \hat{\lambda}_{\mathbb{V}} \frac{\log n}{n} + \frac{1}{2n}\Xi_n\left(\frac{1}{\log n} \right) + O_p\left(\frac{\sqrt{\log n}}{n}\right).
\end{align}
Since~$\Xi_n(1/\log n)/(2n)$ is at most~$O_p(\sqrt{\log n}/n)$ from Proposition~\ref{prop_rv_singfluc_asymp}, we obtain:
\begin{align}
L_n(w_0) = \frac{1}{n}\hat{F}_n - \hat{\lambda}_{\mathbb{V}} \frac{\log n}{n} + O_p\left(\frac{\sqrt{\log n}}{n}\right).
\end{align}
\end{proof}

This result suggests a potential application to model diagnosis, such as detecting overfitting or underfitting in neural networks.  
For instance, if the empirical log loss at the learned weight~$\hat{w}$, denoted~$L_n(\hat{w})$, satisfies~$L_n(\hat{w}) < L_n(w_0)$ as described in Proposition~\ref{prop_emplossWBICandImai}, this may indicate overfitting.
Computation of WBIC and the Imai estimator in neural networks can be performed using stochastic gradient Langevin dynamics (SGLD)~\cite{Welling2011SGLD}, but the practical implementation and convergence remain open areas of investigation~\cite{Lau2023quantifying, Hoogland2024developmental}.

\section{\label{sec_conc}Conclusion}

In this study, we clarified the relationship between WAIC and WBIC, which are model selection criteria derived from singular learning theory.  
Numerical experiments on the simultaneous computation of WAIC and WBIC based on this relationship remain an important subject for future investigation.

\section*{Acknowledgments}
This preprint has no post-submission improvements or corrections.
The Version of Record of this contribution is published in the Neural Information Processing (Lecture Notes in Computer Science), ICONIP 2025 Proceedings and is available online at \url{https://doi.org/10.1007/978-981-95-4367-0_35}.

\section*{Disclosure of Interests}
The authors have no interests to be declared.
%
%
%
\bibliographystyle{plain}


\begin{thebibliography}{10}

\bibitem{AkaikeAIC}
Hirotogu Akaike.
\newblock Information theory and an extension of the maximum likelihood
  principle.
\newblock In {\em Selected papers of hirotugu akaike}, pages 199--213.
  Springer, 1998.

\bibitem{Akaike1974AIC}
Hirotugu Akaike.
\newblock A new look at the statistical model identification.
\newblock {\em IEEE transactions on automatic control}, 19(6):716--723, 1974.

\bibitem{Aoyagi2024consideration}
Miki Aoyagi.
\newblock Consideration on the learning efficiency of multiple-layered neural
  networks with linear units.
\newblock {\em Neural Networks}, 172:106132, 2024.

\bibitem{Aoyagi2025singular}
Miki Aoyagi.
\newblock Singular leaning coefficients and efficiency in learning theory.
\newblock {\em arXiv preprint arXiv:2501.12747}, 2025.

\bibitem{Aoyagi1}
Miki Aoyagi and Sumio Watanabe.
\newblock Stochastic complexities of reduced rank regression in bayesian
  estimation.
\newblock {\em Neural Networks}, 18(7):924--933, 2005.

\bibitem{Hagiwara2002problem}
Katsuyuki Hagiwara.
\newblock On the problem in model selection of neural network regression in
  overrealizable scenario.
\newblock {\em Neural Computation}, 14(8):1979--2002, 2002.

\bibitem{Hartigan1985}
JA~Hartigan.
\newblock A failure of likelihood asymptotics for normal mixtures.
\newblock In {\em Proceedings of the Berkeley Conference in Honor of J. Neyman
  and J. Kiefer, 1985}, pages 807--810, 1985.

\bibitem{Hayasaka2004asymptotic}
Taichi Hayasaka, Masashi Kitahara, and Shiro Usui.
\newblock On the asymptotic distribution of the least-squares estimators in
  unidentifiable models.
\newblock {\em Neural computation}, 16(1):99--114, 2004.

\bibitem{nhayashi8}
Naoki Hayashi.
\newblock Variational approximation error in non-negative matrix factorization.
\newblock {\em Neural Networks}, 126:65--75, 2020.

\bibitem{nhayashi9}
Naoki Hayashi.
\newblock The exact asymptotic form of bayesian generalization error in latent
  dirichlet allocation.
\newblock {\em Neural Networks}, 137:127--137, 2021.

\bibitem{nhayashiLinCBMRLCT}
Naoki Hayashi and Yoshihide Sawada.
\newblock Bayesian generalization error in linear neural networks with concept
  bottleneck structure and multitask formulation.
\newblock {\em Neurocomputing}, 638(14 July):130165, 2025.

\bibitem{nhayashi2}
Naoki Hayashi and Sumio Watanabe.
\newblock Upper bound of bayesian generalization error in non-negative matrix
  factorization.
\newblock {\em Neurocomputing}, 266C(29 November):21--28, 2017.

\bibitem{nhayashi7}
Naoki Hayashi and Sumio Watanabe.
\newblock Asymptotic bayesian generalization error in latent dirichlet
  allocation and stochastic matrix factorization.
\newblock {\em SN Computer Science}, 1(2):1--22, 2020.

\bibitem{Hoogland2024developmental}
Jesse Hoogland, George Wang, Matthew Farrugia-Roberts, Liam Carroll, Susan Wei,
  and Daniel Murfet.
\newblock The developmental landscape of in-context learning.
\newblock {\em arXiv preprint arXiv:2402.02364}, 2024.

\bibitem{Imai2019estimating}
Toru Imai.
\newblock Estimating real log canonical thresholds.
\newblock {\em arXiv preprint arXiv:1906.01341}, 2019.

\bibitem{Imai2019overestimation}
Toru Imai.
\newblock On the overestimation of widely applicable bayesian information
  criterion.
\newblock {\em arXiv preprint arXiv:1908.10572}, 2019.

\bibitem{Lau2023quantifying}
Edmund Lau, Daniel Murfet, and Susan Wei.
\newblock Quantifying degeneracy in singular models via the learning
  coefficient.
\newblock {\em arXiv preprint arXiv:2308.12108}, 2023.

\bibitem{StatRethinkMcElreath2nd}
Richard McElreath.
\newblock {\em Statistical Rethinking: A Bayesian Course with Examples in R and
  Stan}.
\newblock CRC Press, 2nd editon edition, 2020.

\bibitem{SatoK2019PMM}
Kenichiro Sato and Sumio Watanabe.
\newblock Bayesian generalization error of poisson mixture and simplex
  vandermonde matrix type singularity.
\newblock {\em arXiv preprint arXiv:1912.13289}, 2019.

\bibitem{Schwarz1978BIC}
Gideon Schwarz.
\newblock Estimating the dimension of a model.
\newblock {\em The annals of statistics}, 6(2):461--464, 1978.

\bibitem{Watanabe1995generalized}
S~Watanabe.
\newblock Generalized bayesian framework for neural networks with singular
  fisher information matrices.
\newblock In {\em Proceedings of International Symposium on Nonlinear Theory
  and Its applications, 1995}, pages 207--210, 1995.

\bibitem{Watanabe2}
Sumio Watanabe.
\newblock Algebraic geometrical methods for hierarchical learning machines.
\newblock {\em Neural Networks}, 13(4):1049--1060, 2001.

\bibitem{Watanabe2007almost}
Sumio Watanabe.
\newblock Almost all learning machines are singular.
\newblock In {\em 2007 IEEE Symposium on Foundations of Computational
  Intelligence}, pages 383--388. IEEE, 2007.

\bibitem{SWatanabeBookE}
Sumio Watanabe.
\newblock {\em Algebraix Geometry and Statistical Learning Theory}.
\newblock Cambridge University Press, 2009.

\bibitem{WatanabeAIC}
Sumio Watanabe.
\newblock Asymptotic equivalence of bayes cross validation and widely
  applicable information criterion in singular learning theory.
\newblock {\em Journal of Machine Learning Research}, 11(Dec):3571--3594, 2010.

\bibitem{WatanabeEoS}
Sumio Watanabe.
\newblock Equations of states in singular statistical estimation.
\newblock {\em Neural Networks}, 23(1):20--34, 2010.

\bibitem{WatanabeBIC}
Sumio Watanabe.
\newblock A widely applicable bayesian information criterion.
\newblock {\em Journal of Machine Learning Research}, 14(Mar):867--897, 2013.

\bibitem{SWatanabeBookMath}
Sumio Watanabe.
\newblock {\em Mathematical theory of Bayesian statistics}.
\newblock CRC Press, 2018.

\bibitem{Welling2011SGLD}
Max Welling and Yee~W Teh.
\newblock Bayesian learning via stochastic gradient langevin dynamics.
\newblock In {\em Proceedings of the 28th international conference on machine
  learning (ICML-11)}, pages 681--688. Citeseer, 2011.

\bibitem{Yamazaki2013comparing}
Keisuke Yamazaki and Daisuke Kaji.
\newblock Comparing two bayes methods based on the free energy functions in
  bernoulli mixtures.
\newblock {\em Neural Networks}, 44:36--43, 2013.

\bibitem{Yamazaki1}
Keisuke Yamazaki and Sumio Watanabe.
\newblock Singularities in mixture models and upper bounds of stochastic
  complexity.
\newblock {\em Neural Networks}, 16(7):1029--1038, 2003.

\bibitem{Zwiernik2011asymptotic}
Piotr Zwiernik.
\newblock An asymptotic behaviour of the marginal likelihood for general markov
  models.
\newblock {\em Journal of Machine Learning Research}, 12(Nov):3283--3310, 2011.

\end{thebibliography}
%




\end{document}